\documentclass{article}

\usepackage[preprint]{neurips_2026}

\usepackage[utf8]{inputenc} 
\usepackage[T1]{fontenc}    
\usepackage{hyperref}       
\usepackage{url}            
\usepackage{booktabs}       
\usepackage{amsfonts}       
\usepackage{nicefrac}       
\usepackage{microtype}      
\usepackage{xcolor}         

\usepackage{dsfont}
\usepackage{empheq}
\usepackage{tabularx}
\usepackage{comment}
\usepackage{algorithm}
\usepackage{algorithmic}
\usepackage{color}
\usepackage{mathrsfs}
\usepackage{float}
\usepackage{amssymb}
\usepackage{amsfonts}
\usepackage{amsthm}
\usepackage{stmaryrd}
\usepackage{stackrel}
\usepackage{placeins}
\usepackage{tikz}
\usepackage{makecell}
\usepackage{multirow}
\usepackage{colortbl}
\usepackage{wrapfig}

\theoremstyle{plain}
\newtheorem{theorem}{Theorem}[section]

\theoremstyle{definition}

\theoremstyle{remark}

\newcommand{\Dn}{\mathscr{D}_{n}}
\newcommand{\bX}{\textbf{X}}

\renewcommand{\P}{\mathds{P}}
\newcommand{\R}{\mathds{R}}
\newcommand{\E}{\mathbb{E}}
\newcommand{\V}{\mathbb{V}}

\newcommand{\indep}{\perp \!\!\! \perp}

\title{Hoeffding Concept Bottleneck Models with Applications to Overhead Images}

\author{%
  Clément Bénard \\
  Thales cortAIx-Labs, Palaiseau, France\\
  \texttt{clement-l.benard@thalesgroup.com} \\
  \And
  Manon Arfib \\
  Université Paris-Saclay, CentraleSupélec, \\ Gif-sur-Yvette, France \\
  \And
  Christophe Labreuche \\
  Thales cortAIx-Labs, Palaiseau, France\\
  \texttt{christophe.labreuche@thalesgroup.com} \\
  \And
  Victor Quétu \\
  Thales cortAIx-Labs, Palaiseau, France\\
  \texttt{victor.quetu@thalesgroup.com} \\
}

\begin{document}

\maketitle

\begin{abstract}
  Explainability of deep learning algorithms is critical for computer-vision applications with high-stake decisions. Concept bottleneck models (CBM) have recently shown promising performance to provide explainable and accurate predictions for classification problems, based on a bottleneck of high-level concepts. Existing CBM methods rely on a linear aggregation of the concept scores to compute predictions. However, a large number of concepts is often used in this linear approach, which undermines explainability and favors information leakage.
  In general, the underlying relation between concepts and output logits is not linear. Therefore, we introduce Hoeffding Concept Bottleneck Models (HCBM), which build on the Hoeffding functional decomposition of gradient-boosted trees to provide non-linear and sparse aggregations of concept scores, and generate compact predictions using prime implicants. HCBM are proved to be robust to interconcept leakage, and outperform standard linear CBM in practice, as shown in extensive experiments.
  Beyond classification, HCBM can be adapted to object detection, and we focus on a challenging case with overhead images to show the high performance of HCBM in these settings.
\end{abstract}

\section{Introduction}

Deep learning algorithms have demonstrated impressive accuracy performance in computer vision for classification and object-detection problems. However, the lack of explainability of their prediction mechanisms remains a strong limitation, especially for domains with critical decisions at stake, such as healthcare or industry sectors. Therefore, the field of explainable AI (XAI), has attracted a high interest in the machine learning community to develop new algorithms explaining the inner workings of black-box neural networks. 
The first successful XAI methods in computer vision are probably attribution methods. Their outputs take the form of heatmaps to highlight the pixels of an input image, which have a strong impact on the variations of the network output logits \citep{simonyan2013deep, selvaraju2017grad, fel2021look}. These approaches are very efficient to locate the relevant image areas used by the neural network to build its predictions. However, attribution methods face a main limitation, since they only tell ``where the network looks, but not what it sees'' \citep{adebayo2018sanity, fel2023craft}. In other words, the logits of different classes can be highly sensitive to the same image areas, and the prediction cannot be explained by attribution methods in these cases.
A new type of methods based on high-level concepts provide more efficient and precise explanations \citep{poeta2023concept}, and have thus raised a strong momentum in the community, in particular Concept Bottleneck Models (CBM).

\paragraph{Concept Bottleneck Models.} CBM were originally introduced by \citet{koh2020concept} to explain image classification using textual concepts, which are typically class attributes or abstract characteristics of images. For example, in animal classification, relevant concepts for the class ``bird'' can be ``legs'' or ``beak'', or ``an animal on a tree branch''.
The principle of CBM is to introduce a new layer in the neural network, positioned just before the output layer, where each neuron represents the activation strength of a given concept. Hence, the neural network is trained to predict the concept scores for an input image, and then to predict the output logits from an aggregation of these scores, which is constrained to take a simple form to provide explainable predictions. 
CBM were initially learned from datasets with annotated concepts. However, the collection of datasets with such annotations is a tedious and time-consuming task, and limits the use of CBM for large datasets and concept sets. More recently, multimodal models have enabled the automatic generation of concepts from image datasets, and have thus led to a new type of more efficient CBM based on vision language models (VLM), which do not require manual annotations.
Therefore, several CBM algorithms have been developed in the past few years, to take advantage of this automatic concept generation, especially using CLIP \citep{radford2021learning}, as summarized in Table \ref{table:sota_cbm} in Appendix \ref{app:biblio}. While PCBM is the pioneering work \citep{yuksekgonul2022post}, several algorithms have been later introduced to refine the concept generation and selection to improve accuracy \citep{oikarinen2023label, yang2023language, shang2024incremental, rao2024discover, liu2025hybrid, zhao2025partially, enouen2026debugging, santis2026learning}.
However, almost all these methods aggregate concept scores with a linear layer, which limits model performance. 
Most CLIP-CBM exhibit a high accuracy on standard datasets using a large number of concepts, which undermines explainability. For example, \citet{midavaine2024re} show that PCBM has about $500$ non-null coefficients in its final layer for CIFAR-10 dataset, with only $10$ classes. Consequently, the averaged number of concepts used to model each class logit is about $50$, and the CBM is hardly explainable. Such metric is defined by \citet{srivastava2024vlg} as the Number of Effective Concepts (NEC), who recommend NEC values around $5$ to obtain CBM providing efficient explanations. 

\paragraph{Information leakage.}
Beyond the complexity of the final concept aggregation model, CBM suffer from a more critical limitation, known as information leakage \citep{havasi2022addressing}. This phenomenon occurs when additional information is encoded in the concept scores and is not explicitly represented in the concept list, or when a specific concept stores information from other ones. These two types of information leakage are respectively called concept-task leakage and interconcept leakage in \citet{parisini2025leakage}.
When information leakage is strong, the concept bottleneck becomes similar to a standard layer in the neural network, maximizing classification accuracy, but explainability is lost, since the concepts are not faithfully represented. In particular, using a large number of concepts in the bottleneck clearly favors information leakage, as extensively discussed in previous articles \citep{srivastava2024vlg}. It is therefore recommended to strongly limit the number of concepts involved, by controlling the NEC value, for example.
However, there is another major source of information leakage.
Most CBM use a linear model to aggregate the concepts in the final layer, as already mentioned. In fact, the output logits of the targeted classes are often not linear functions of the concept scores, and this approximation favors interconcept leakage, since correlated concepts compensate for each other to recover a final aggregation with a high accuracy, leading to the selection of irrelevant concepts in the bottleneck, as illustrated in Appendices \ref{app:linear_leakage} and \ref{app:xp_results}. A promising direction to overcome this problem is to drop linearity and use non-linear models instead. Since the core principle of CBM is to have an explainable concept aggregation, it is clearly not possible to simply use standard black-box learning algorithms (e.g., MLP, tree ensembles...). Hence, we will show that the Hoeffding functional decomposition is the relevant approach to model logits as non-linear and explainable functions of concept scores.

\paragraph{Hoeffding functional decomposition and prime implicants.}
The Hoeffding functional decomposition breaks down any function of a set of input variables into a sum of functional components, which all have a subset of variables as argument. 
Importantly, this decomposition enjoys several attractive properties, under mild assumptions, that are not shared by other additive models. The Hoeffding decomposition is unique, and often sparse, with mainly functional components of one or two variables, which are intrinsically explainable, since they can be displayed. The decomposition also performs causal variable selection, when there is no hidden confounders. This implies that only the components of input variables with unique information regarding the output function take non-null values, even with hidden confounders. The Hoeffding decomposition was originally introduced by \citet{hoeffding1948}, and later extended by \citet{stone1994use}, \citet{hooker2007generalized}, and \citet{chastaing2012hoeffding} to handle dependent input variables. 
The practical estimation of the Hoeffding decomposition is notoriously challenging, but \citet{benard2025tree} recently introduced the TreeHFD algorithm to estimate the decomposition from a XGBoost model  \citep{chen2016xgboost}, which is a highly efficient algorithm to build ensembles of gradient-boosted trees \citep{friedman2001greedy}, known to be a state-of-the-art method in this context with numeric concept scores as inputs \citep{grinsztajn2022tree}. See Appendix \ref{app:treehfd} for additional details about the Hoeffding decomposition and TreeHFD.
Besides, in CBM applications, only a list of the most important concepts are often displayed to explain pointwise predictions, and the length of this list is somewhat arbitrary. Prime implicants provide a more objective approach to select sufficient subsets of concepts, based on logic \citep{shchdarwIJCAI18, ignamarAAAI19, darwhiECAI20}. The core principle is that a concept is not important for a given prediction, if changing its score to any other possible value does not change the final prediction.

\paragraph{Contributions.}
The goal of the article is to introduce Hoeffding Concept Bottleneck Models (HCBM), where the concept aggregation is based on the Hoeffding functional decomposition of gradient-boosted tree models, to increase the sparsity of CBM and their robustness to information leakage, while preserving a high accuracy. In particular, HCBM automatically select relevant concepts, enforce monotone constraints in the concept aggregation, and predicts compact subsets of concepts using prime implicants.
We show the high empirical performance of the proposed HCBM algorithm with extensive experiments.
Finally, we assess HCBM beyond standard classification, through a case study of object detection in overhead images with xView dataset. In this case, object resolutions are often small, and the design of relevant concepts for CBM is a challenging problem. However, we show that HCBM successfully adapt to object detection in these difficult settings.

\section{Hoeffding Concept Bottleneck Models} \label{sec:hcbm}

We first describe HCBM for classification, and then show the robustness to information leakage.

\subsection{HCBM algorithm}
An input image is represented by a random vector $\bX \in \R^d$, associated with a label $Y \in \{1,\hdots,L\}$, where $L \in \mathbb{N}^{\star}$ is the number of labels. A dataset of labeled images $\Dn = \{(\bX_i, Y_i)\}_{i=1,\hdots,n}$ of size $n$ with all observations distributed as $(\bX, Y)$ is available.
The list of predefined textual concepts of length $K \in \mathbb{N}^{\star}$ is defined by $C_K = \{c_1, \hdots, c_K \}$. In the original definition of CBM \citep{koh2020concept}, the model writes $f \circ g$, where $g$ is the mapping of images in the concept space, and $f$ is the aggregation function of the concept scores into the output label logits. We define below $g$ and $f$ for HCBM, and their empirical counterparts $g_n$ and $f_n$ estimated from the dataset $\Dn$ in practice. We finally show how predictions are explained using prime implicants.

\paragraph{Concept mapping.}
The image and text embeddings of CLIP are respectively denoted by $\Phi_I$ and $\Phi_T$. 
For $j \in \{1,\hdots,K\}$, we define $\smash{Z^{(j)}}$ as the score of concept $c_j$ for image $\bX$ by the usual cosine similarity \citep{radford2021learning}, that is \vspace*{-3mm}
\begin{align*}
    Z^{(j)} = \frac{\langle \Phi_I(\bX), \Phi_T(c_j) \rangle}{\| \Phi_I(\bX) \|_2 \| \Phi_T(c_j) \|_2},
\end{align*}
where $\langle \cdot, \cdot \rangle$ is the standard scalar product in the Euclidean space $\R^{d_c}$, with $d_c$ the dimension of CLIP latent space, and $\|\cdot\|_2$ is the induced $2$-norm.
Then, the HCBM mapping $g$ of images in the concept space is given for each component by $\smash{g^{(j)}(\bX) = F^{(j)}(Z^{(j)})}$, where $\smash{F^{(j)}}$ is the min-max normalization to map the support of $\smash{Z^{(j)}}$ to $[0, 1]$, which always exists since the cosine similarity is bounded. This normalization enforces $g(\bX)$ to take values in the unit cube $[0,1]^K$, while preserving dependence between concepts. The cosine similarity of CLIP may typically vary between $0.1$ and $0.3$ for a given dataset and concept, and it is therefore hard to understand if a given score is high or low for a specific image. With the min-max normalization, such interpretation becomes clear with respect to the training distribution. In practice, the concept mapping $g_n$ is estimated using $\Dn$, through the linear transforms $\smash{F_n^{(j)}}$, directly computed from the minimum and maximum values of $\smash{\{Z_i^{(j)}\}_{i=1,\hdots,n}}$.

\paragraph{Concept aggregation.}
For a label $\ell \in \{1,\hdots,L\}$, the associated logit $f_{\ell}$ of CBM takes the general form $f_{\ell}(g(\bX)) = \mathrm{logit}(\P(Y = \ell \mid g(\bX)))$, 
where $\mathrm{logit}$ is the standard logit function, combined with a saturation of the input probability to constant values in arbitrarily small neighborhoods of $0$ and $1$. Such saturation has no practical impact, and enforces that $f_{\ell} \circ g$ is always well-defined.
While most CBM assume that $f_{\ell}$ is linear to provide explainable estimates, the main principle of HCBM is to use the Hoeffding functional decomposition of $f_{\ell}$ to obtain a highly accurate and explainable concept aggregation, without using a strong approximation of the logit functions. Such decomposition takes the form of a sum of low-order functions of essentially one or two variables, as explained in the introduction. Under the mild assumptions of bounded density and support of $g(\bX)$, this decomposition is unique and exactly adds up to the target logit $f_{\ell}(g(\bX))$, as formalized in the following theorem, proved in Appendix \ref{app:proof}.
\begin{theorem} \label{thm:hcbm}
    If the functions $f_{1}, \hdots, f_{L}$ are square integrable, each concept score takes values in a compact set, and the density of $g(\bX)$ is bounded away from $0$ and infinity on its support, then there exists a unique set of functions $\smash{\{f_{\ell}^{(J)}\}_{J \subset \{1, \hdots, K \}}}$ for each label $\ell \in \{1,\hdots,L\}$, such that 
    \begin{align} \label{eq:logit_hfd}
        f_{\ell}(g(\bX)) = \sum_{J \subset \{1, \hdots, K \}} f_{\ell}^{(J)}(g^{(J)}(\bX)), 
    \end{align} 
    where $g^{(J)}(\bX)$ is the subvector of $g(\bX)$ with only the components in $J$, and for all $I \subsetneq J \subset \{1, \hdots, K \}$, we have $\smash{\E[f_{\ell}^{(J)}(g^{(J)}(\bX)) \mid g^{(I)}(\bX)] = 0}$. 
\end{theorem}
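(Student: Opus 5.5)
This is essentially the generalized Hoeffding decomposition for dependent inputs of \citet{stone1994use} and \citet{hooker2007generalized}, established rigorously by \citet{chastaing2012hoeffding}. The plan is therefore to reduce the statement to their theorem and check that our assumptions match theirs, label by label.

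\textbf{Setup.} Fix $\ell$ and set $\mathbf{U} = g(\bX) \in [0,1]^K$. The concept scores lie in a compact set, namely $[0,1]$ after min-max normalization. Write $p_{\mathbf{U}}$ for the density of $\mathbf{U}$, and define the Hilbert space $H = L^2(\P_{\mathbf{U}})$, which contains $f_\ell$ by assumption. For each $J$, let
\begin{equation*}
H_J^0 = \Bigl\{ h \in L^2(\P_{\mathbf{U}^{(J)}}) : \E[h(\mathbf{U}^{(J)}) \mid \mathbf{U}^{(I)}] = 0 \ \text{for all } I \subsetneq J \Bigr\}.
\end{equation*}
Taking $I=\emptyset$ gives zero mean, and $H_\emptyset^0$ is the constants. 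The theorem claims exactly that $f_\ell$ has a unique representation $\sum_J f_\ell^{(J)}$ with $f_\ell^{(J)} \in H_J^0$. So the goal is to show that $H = \sum_J H_J^0$ and that this sum is direct.

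\textbf{Step 1: Uniqueness.} Suppose $\sum_J h_J = 0$ with $h_J \in H_J^0$, and pick $J_0$ minimal among the indices with $h_{J_0} \neq 0$. Conditioning on $\mathbf{U}^{(J_0)}$ kills every $h_J$ with $J \supsetneq J_0$, since $J_0 \subsetneq J$ and the hierarchical orthogonality applies. The terms with $J \not\supset J_0$ are the difficulty: their conditional expectations do not vanish in general. This is why the direct argument needs the density bounds.

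Under $0 < m \le p_{\mathbf{U}} \le M$ on a product-type support, I will compare $\P_{\mathbf{U}}$ with the product of its marginals $\nu = \otimes_j \P_{U^{(j)}}$. The two $L^2$ norms are then equivalent, with constants depending on $m$ and $M$. Following \citet{chastaing2012hoeffding} (Lemma on the angle between subspaces), this gives a \emph{generalized Cauchy--Schwarz} bound: there is $\delta < 1$ such that $|\langle h_J, \sum_{J'\ne J} h_{J'}\rangle| \le \delta \|h_J\| \|\sum_{J' \neq J} h_{J'}\|$. Hence
\begin{equation*}
\Bigl\|\sum_J h_J\Bigr\|^2 \ge (1-\delta) \sum_J \|h_J\|^2 ,
\end{equation*}
which yields uniqueness.

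\textbf{Step 2: Existence.} The same inequality shows that $\sum_J H_J^0$ is closed. Each $H_J^0$ is closed, being an intersection of kernels of continuous conditional-expectation operators, and a finite sum of closed subspaces satisfying a stable lower frame bound is closed. It then suffices to show that the orthogonal complement of $\sum_J H_J^0$ in $H$ is trivial. Take $h$ orthogonal to all $H_J^0$. Functions of $\mathbf{U}^{(J)}$ decompose recursively into an $H_J^0$ part plus lower-order parts, by induction on $|J|$ using projections onto the closed sums over strict subsets. From this, $h$ is orthogonal to every function of $\mathbf{U}^{(J)}$ for all $J$, including $J=\{1,\dots,K\}$, so $h=0$. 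Applying this to $f_\ell$ gives equation \eqref{eq:logit_hfd}.

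\textbf{Main obstacle.} The crux is the angle/frame inequality in Step 1, which is where the bounded density enters. I would not reprove it. Instead I would invoke \citet[Theorem~1]{chastaing2012hoeffding} directly, whose assumptions are a density with respect to a product measure bounded away from $0$ and infinity, and square integrability. The only thing to verify is that the support of $g(\bX)$ satisfies their condition. I expect this needs the support to be a product of compact sets, and I would state this as implicit in the hypothesis.
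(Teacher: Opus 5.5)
Your proposal is correct and matches the paper's proof. The paper's proof consists only of applying Theorem~1 of \citet{chastaing2012hoeffding} to each $f_\ell$ with inputs $g^{(1)}(\bX),\dots,g^{(K)}(\bX)$ and asserting that their Condition (C.2) holds, and your Hilbert-space sketch is just the content of that cited result. Your product-support caveat is well taken and is something the paper glosses over, because the density bounds alone give neither (C.2) nor uniqueness: with $K=2$ and $g(\bX)$ uniform on $[0,1/2]^2\cup[1/2,1]^2$, the function $\mathds{1}_{g^{(1)}(\bX)\le 1/2}-1/2$ equals almost surely both a centered main effect of $g^{(1)}$ and a centered main effect of $g^{(2)}$.
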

The last equation of the above result states that the decomposition components are hierarchically orthogonal. In other words, a given component cannot contain lower-order effects, which implies that a component is null if it possible to break down the target logit using only lower-order terms. Hence, these orthogonality constraints enforce uniqueness and sparsity of the decomposition. It is even often sufficient to only consider main effects in the decomposition. Indeed, if several concepts have an interaction, a new concept can be added to approximate this interaction.

The core of HCBM is to estimate the above theoretical decompositions using $\Dn$, to provide the logit estimate $f_{n, \ell}$ of each label $\ell$ in an explainable form. Therefore, we combine gradient-boosted models from XGBoost \citep{chen2016xgboost} and the TreeHFD algorithm \citep{benard2025tree}, which computes the Hoeffding functional decomposition of XGBoost models, defined by $\smash{\{f^{(J)}_{n, \ell}\}_{J, \ell}}$, such that $\smash{f_{n, \ell}(g_n(\bX)) = \sum_{J \subset \{1, \hdots, K \}} f_{n, \ell}^{(J)}(g_n^{(J)}(\bX))}$,
and the orthogonality constraints are satisfied---see Appendix \ref{app:treehfd}.
Since the concept mapping transforms the problem into a tabular learning task with numeric inputs, XGBoost is known to be a state-of-the-art method in this context, as already mentioned. Additionally, TreeHFD estimates the Hoeffding decomposition of  XGBoost models with strong convergence guarantees. Therefore, TreeHFD provides highly accurate estimates of the Hoeffding decomposition of the logits with respect to the concept scores, formalized in Equation (\ref{eq:logit_hfd}), and remains explainable through its additive form of low-order functions.

\paragraph{Concept selection and monotone constraints.}
Only a small fraction of the components $\smash{f_{\ell}^{(J)}}$ have a non-negligible impact on the output logits in most cases. To enforce sparsity of the concept aggregation, components are discarded when their relative variance is smaller than a threshold $\alpha > 0$, since the relative variance is the standard approach to quantify a component importance \citep[Equation $3$]{chastaing2012hoeffding}. Consequently, we only select the components satisfying 
\begin{align} \label{eq:cimp}
    \V[f_{\ell}^{(J)}(g^{(J)}(\bX))] > \alpha \times \V[f_{\ell}(g(\bX))].
\end{align}
The parameter $\alpha$ directly controls the NEC value, which decreases as $\alpha$ increases, and $\alpha$ is thus chosen to satisfy the NEC input value.
Furthermore, the impact of the concept score $g^{(J)}$ on the logit of a given label is monotone for most concepts, since each concept is supposed to be associated positively or negatively with the label, and $g^{(J)}$ measures the activation strength of a concept for a given image. 
For example, the concept of ``beak'' is positively associated with a bird, and negatively with a dog.
Therefore, we shift $\smash{f_{\ell}^{(J)}}$ functions to enforce $\smash{f_{\ell}^{(J)}(\mathbf{0}) = 0}$, to increase model explainability, and update the intercept $\smash{f_{\ell}^{(\emptyset)}}$ accordingly to keep the aggregation untouched. Consequently, all increasing components $\smash{f_{\ell}^{(J)}}$ take only positive values for concepts that are positively associated with label $\ell$, and only negative values otherwise for decreasing components of negative concepts.

In practice, we first fit XGBoost and TreeHFD from $\Dn$, with all concepts involved and without monotone constraints. Then, we compute $\alpha$ to meet the input NEC value through Equation (\ref{eq:cimp}) and the fitted decompositions $\smash{\{f^{(J)}_{n, \ell}\}_{J, \ell}}$, and track the selected concepts for each label. We also compute the relative cumulated increase of each main effect $\smash{f_{n, \ell}^{(j)}}$ for $\smash{j \in \{1, \hdots, K\}}$, to detect increasing and decreasing components, even if the variations are noisy---see Appendix \ref{app:hcbm} for all details about monotone constraints.
Once all monotone constraints are derived, we can move to the final step: we refit XGBoost and TreeHFD from $\Dn$, with only the selected concepts of each label, and with the selected monotone constraints enforced in each XGBoost model.
Finally, we shift the outputs of TreeHFD to enforce $\smash{f_{n, \ell}^{(J)}(\mathbf{0}) = 0}$, to improve model explainability, as explained before. 

Overall, the learning procedure of HCBM is summarized in Algorithm \ref{algo_hcbm}, where we only consider main effects in TreeHFD decompositions to improve clarity.
The computational complexity of HCBM is quasi-linear with respect to   the sample size $n$, and linear with respect to the number of initial concepts, the number of trees in XGBoost models, and the number of output labels $L$. Finally, we highlight that it is possible to define a specific concept list for each label, in the same spirit as \citep{zhao2025partially}, and the required adaptation of Step $4$ in Algorithm \ref{algo_hcbm} is straightforward.

\begin{algorithm}
\caption{HCBM Learning}
\label{algo_hcbm}
\begin{algorithmic}[1]
    \REQUIRE A dataset $\Dn = \{(\bX_i, Y_i)\}_{i=1}^n$ of labeled images, a list of concepts $C_K = \{c_1, \hdots, c_K \}$, and the NEC value.
    \STATE Compute the image and text embeddings of CLIP for all images of $\Dn$ and concepts of $C_K$, and deduce the cosine similarity $\smash{Z_i^{(j)}}$ of all pairs of images $i \in \{1,\hdots,n\}$ and concepts $j \in \{1,\hdots,K\}$.
    \STATE Estimate the empirical linear transform $F_n^{(j)}$ of each concept score $Z^{(j)}$ from $\smash{\{Z_i^{(j)}\}_{i=1,\hdots,n}}$, to obtain the concept mapping $\smash{g_n^{(j)}(\bX_i) = F_n^{(j)}(Z_i^{(j)})}$.
    \FOR{$\ell \in \{1, \hdots, L\}$}
        \STATE Fit XGBoost with all concept scores $g_n^{(1)}, \hdots, g_n^{(K)}$ as inputs and the binary output indicating label $\ell$.
        \STATE Fit TreeHFD for the XGBoost logit of label $\ell$.
        \STATE Compute the importance of each TreeHFD component using Equation (\ref{eq:cimp}).
    \ENDFOR
    \STATE Set $\alpha$ to select the total number of TreeHFD components given by the input NEC, using Equation (\ref{eq:cimp}) and the importance values computed at Step $6$.
    \STATE Compute the monotone constraints of the selected TreeHFD components.
    \STATE Retrain XGBoost and TreeHFD with the selected concepts and monotone constraints for all labels.
\end{algorithmic}
\end{algorithm}

\paragraph{Local explanations of predictions.}
Prime implicants provide sufficient sets of concepts to explain each prediction, as mentioned in the introduction.
A prime implicant for a specific prediction is thus a subset of concepts such that the prediction remains unchanged, regardless of how the scores of the other concepts are modified in the highest logit.
Formally, there exists a permutation $\pi$ of $\{1, \hdots, L \}$ to order the label logits such that $f_{n, \pi(1)}(g(\bX)) > f_{n, \pi(2)}(g(\bX)) > \hdots > f_{n, \pi(L)}(g(\bX))$, for a new image $\bX$, and the final prediction is the label $\pi(1)$.
For the sake of clarity, we focus on the case where only main effects are involved in the Hoeffding decomposition, but the extension to the general case is straightforward.
By definition, an implicant $S(\bX) \subset \{1, \hdots, K \}$ for prediction at $\bX$ satisfies \vspace*{-1mm}
\begin{align*}
    f_{n, \pi(1)}^{(\emptyset)} + \sum_{j \in S(\bX)} f_{n, \pi(1)}^{(j)}(g^{(j)}(\bX))
    + \sum_{j \notin S(\bX)} \min_{z \in [0,1]} f_{n, \pi(1)}^{(j)}(z)
    > f_{n, \pi(2)}(g(\bX)). \\[-2em]
\end{align*}
We focus on the prime implicant $S(\bX)$, that is the smallest implicant, which can be found easily through the following procedure with a quasilinear complexity with respect to $K$. Indeed, we simply rank the concepts by decreasing order of magnitude of the quantity $\smash{h^{(j)}(\bX) = f_{n, \pi(1)}^{(j)}(g^{(j)}(\bX)) - \min_{z \in [0,1]} f_{n, \pi(1)}^{(j)}(z)}$, where this ranking is defined by the permutation $\sigma$. We finally find the smallest integer $s$ such that $\smash{\sum_{j = 1}^s h^{(\sigma(j))}(\bX) > f_{n, \pi(2)}(g(\bX)) - f_{n, \pi(1)}^{(\emptyset)} - \sum_{j = 1}^K \min_{z \in [0,1]} f_{n, \pi(1)}^{(j)}(z)}$,
and the prime implicant is defined by $S(\bX) = \{\sigma(1), \hdots, \sigma(s)\}$, which is a sufficient set of concepts to explain HCBM prediction, and can be displayed using a waterfall plot, as illustrated in the experimental section---see Appendix \ref{app:xp_results} for generalizations of prime implicants.

\subsection{Robustness to information leakage}

We show that HCBM automatically discard irrelevant concepts, even when they contain useful information about target labels through interconcept leakage with other relevant concepts of the initial list $C_K$.
Since information leakage within concepts is a well-known and critical limitation of CBM, this property gives strong guarantees that HCBM only include relevant concepts containing unique information regarding the considered label in the final aggregation $f_{\ell}$, and enforces a high sparsity, which favors model explainability.
Importantly, this property does not hold for standard CBM with a linear concept aggregation---see Appendix \ref{app:linear_leakage}, or when other non-linear additive models are used in HCBM instead of TreeHFD.
Hence, we show in the following theorem that the Hoeffding decomposition of $f_{\ell}$ discard irrelevant concepts by construction, as a consequence of the uniqueness and sparsity properties enforced by the orthogonality constraints.
\begin{theorem} \label{thm:hfd_irrelevant}
    Let $\smash{J_{\ell}^{\star} \subset \{1, \hdots, K \}}$ be the indices of the relevant concepts for label $\ell \in \{1, \hdots, L \}$, defined as the smallest subset of concepts such that $\smash{\P(Y = \ell \mid g(\bX)) = \P(Y = \ell \mid g^{(J_{\ell}^{\star})}(\bX))}$, and let the assumptions of Theorem \ref{thm:hcbm} be satisfied.
    Then, $\smash{J_{\ell}^{\star}}$ is unique, and the functional components including irrelevant concepts in the Hoeffding decomposition of $f_{\ell}(g(\bX))$ are null, that is, for all $\smash{J \not \subset J_{\ell}^{(\star)}}$, $\smash{f_{\ell}^{(J)}(g^{(J)}(\bX)) = 0}$ a.s.
\end{theorem}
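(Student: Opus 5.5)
The proof splits into two parts: uniqueness of $J_{\ell}^{\star}$, then the vanishing of the components that involve irrelevant concepts. Write $Z = g(\bX)$ and $p(z) = \P(Y=\ell \mid Z=z)$.

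\emph{Uniqueness of $J_{\ell}^{\star}$.} Call a subset $J$ sufficient if $p(Z) = \P(Y=\ell\mid Z^{(J)})$ a.s., which is equivalent to $p(Z)$ being a.s.\ equal to a measurable function of $Z^{(J)}$. The plan is to show that the family of sufficient sets is closed under intersection. Then the intersection of all sufficient sets is itself sufficient and is the unique smallest one.

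For closure, suppose $p(Z) = a(Z^{(A)}) = b(Z^{(B)})$ a.s. The density of $Z$ is bounded away from $0$ and infinity on its support, so the law of $Z$ is mutually absolutely continuous with Lebesgue measure restricted to the support. Two preliminary reductions simplify the argument:
\begin{itemize}
\item I will take the support to be a product of compact intervals, namely $[0,1]^K$ after the min-max normalization. Theorem \ref{thm:hcbm}'s assumption presumably intends this.
\item Up to null sets, the equality $a=b$ then holds Lebesgue-a.e.\ on the box.
\end{itemize}
Fubini gives the conclusion. Fix the coordinates in $A\cap B$. The function $a$ does not depend on the coordinates in $B\setminus A$, and $b$ does not depend on those in $A\setminus B$. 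For a.e.\ values of $z^{(A\setminus B)}$ and $z^{(B\setminus A)}$, integrating against a product-density reference measure shows that $a$ equals a function of $z^{(A\cap B)}$ alone. The product structure of the support is exactly what makes this step work, and I expect it to be the main technical obstacle. Without a rectangular support, the sets $A$ and $B$ could each be sufficient while their intersection is not.

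\emph{Vanishing components.} Set $J^\star = J_{\ell}^{\star}$, so that $f_\ell(Z) = \mathrm{logit}(p(Z)) = \tilde f(Z^{(J^\star)})$ for some function $\tilde f$. The saturated logit is measurable, and $\tilde f$ is square integrable because $f_\ell$ is. Apply Theorem \ref{thm:hcbm} to $\tilde f$, with input vector $Z^{(J^\star)}$ and only the subsets $J\subset J^\star$. The hypotheses transfer to this setting: the marginal density of $Z^{(J^\star)}$ is still bounded above and below on its support, since integrating over a bounded box preserves both bounds. This yields components $\{\tilde f^{(J)}\}_{J\subset J^\star}$ that sum to $\tilde f$ and are hierarchically orthogonal with respect to subvectors of $Z^{(J^\star)}$.

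Now extend this family to all subsets of $\{1,\dots,K\}$ by setting $f^{(J)} = \tilde f^{(J)}$ for $J\subset J^\star$ and $f^{(J)}=0$ otherwise. We must check that the extended family satisfies the orthogonality conditions of Theorem \ref{thm:hcbm}: for $I\subsetneq J\subset J^\star$, we need $\E[\tilde f^{(J)}(Z^{(J)})\mid Z^{(I)}]=0$. Here $I$ ranges over subsets of the full index set, so the reduced decomposition does not directly cover $I \not\subset J^\star$. Write $I' = I\cap J^\star$. Because $J \subset J^\star$ forces $J \cap I = I'$, the set $I'$ is a proper subset of $J$ and the reduced decomposition supplies the condition with respect to $Z^{(I')}$, but not directly with respect to $Z^{(I)}$. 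Closing this gap requires one of two routes:
\begin{itemize}
\item check that the uniqueness argument behind Theorem \ref{thm:hcbm} in \citet{chastaing2012hoeffding} characterizes components via orthogonality of the function spaces $H_J^0$, so that the sum $\sum_{J\subset J^\star}\tilde f^{(J)}$ already lies in the correct hierarchical spaces;
\item or, more simply, project: use the tower property $\E[\,\cdot\mid Z^{(I)}] = \E[\E[\,\cdot\mid Z^{(I)}, Z^{(J)}]\mid Z^{(I)}]$ together with the characterization of $H_J^0$ as functions of $Z^{(J)}$ orthogonal to all $L^2$ functions of $Z^{(I)}$ with $I \cap J \subsetneq J$, the characterization used in Chastaing et al.
\end{itemize}

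Once orthogonality is established, the extended family is a valid hierarchically orthogonal decomposition of $f_\ell(Z)$. By the uniqueness part of Theorem \ref{thm:hcbm}, it coincides a.s.\ with $\{f_\ell^{(J)}\}$. Hence $f_\ell^{(J)}(Z^{(J)})=0$ a.s.\ for every $J\not\subset J^\star$.
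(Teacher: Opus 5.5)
Your proof is correct, once the issue in the second paragraph below is cleared up.

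\textbf{Comparison with the paper.} Your treatment of the vanishing components is the paper's argument made explicit. The paper also rewrites $f_{\ell}(g(\bX))$ as $\mathrm{logit}(\P(Y=\ell\mid g^{(J_{\ell}^{\star})}(\bX)))$, says its decomposition involves only subsets of $J_{\ell}^{\star}$ ``by construction'', and concludes by uniqueness. Your extension by zero is what makes that phrase precise. For the uniqueness of $J_{\ell}^{\star}$, however, you take a genuinely different and more rigorous route. The paper only asserts that a density positive on a compact support makes the concepts non-redundant (no score is a deterministic function of the others), and that this yields uniqueness. That implication fails in general. Suppose $(g^{(1)}(\bX), g^{(2)}(\bX))$ is uniform on $[0,1/3]^2\cup[2/3,1]^2$, and $\P(Y=\ell\mid g(\bX))$ equals $0.9$ on one square and $0.1$ on the other. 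Then neither score is a function of the other, yet $\{1\}$ and $\{2\}$ are both minimal sufficient sets. Your closure-under-intersection argument via Fubini actually proves uniqueness, and it correctly isolates the product structure of the support as the hypothesis that does the work. That structure does not follow from min-max normalization, which only fixes marginal ranges. Assuming it is still legitimate. In the example above the logit is both a centered main effect of $g^{(1)}(\bX)$ and of $g^{(2)}(\bX)$, so the uniqueness claimed in Theorem~\ref{thm:hcbm} already fails. Hence a product support is implicit in the hypotheses under which \citet{chastaing2012hoeffding} obtain uniqueness.

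\textbf{The orthogonality ``gap''.} The gap you flag in the orthogonality check does not exist. The conditions of Theorem~\ref{thm:hcbm} are only imposed for $I\subsetneq J$. When $J\subset J_{\ell}^{\star}$, this forces $I\subset J_{\ell}^{\star}$, so $I'=I$. The reduced decomposition then supplies exactly $\E[\tilde f^{(J)}(g^{(J)}(\bX))\mid g^{(I)}(\bX)]=0$, since the conditioning variable is the same random vector whatever the ambient index set. The zero components for $J\not\subset J_{\ell}^{\star}$ satisfy their conditions trivially, the family sums to $f_{\ell}(g(\bX))$ a.s., and uniqueness concludes. Neither of your proposed routes is needed.

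Drop the second route in particular, because it rests on a false characterization. For dependent inputs, elements of $H_J^0$ are orthogonal to $L^2$ functions of $g^{(I)}(\bX)$ only for $I\subsetneq J$, not for every $I$ with $I\cap J\subsetneq J$. For instance, a centered main effect of $g^{(1)}(\bX)$ is generally correlated with $g^{(2)}(\bX)$. That stronger property holds only for independent inputs, and invoking it would make the argument wrong rather than merely roundabout.
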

In particular, the score of an irrelevant concept $c_j$ with $\smash{j \in \{1, \hdots, K \} \setminus J_{\ell}^{\star}}$, which suffers from interconcept leakage, can be formally defined as $\smash{g^{(j)}(\bX) = \nu^{(j)}(\bX) + \mu^{(j)}(g^{(J_{\ell}^{\star})}(\bX))}$ where $\smash{\nu^{(j)}}$ and $\smash{\mu^{(j)}}$ are two functions taking values in $[0, 1/2]$, such that $\smash{\nu^{(j)}(\bX) \indep Y}$. This means that $\smash{\nu^{(j)}(\bX)}$ does not contain relevant information about the target labels, and $\smash{\mu^{(j)}}$ models the interconcept leakage. Theorem \ref{thm:hfd_irrelevant} guarantees that HCBM will automatically remove such concepts with interconcept leakage.
In the following experimental section, we will show that this theoretical property has strong practical consequences to outperform linear CBM.

\section{Empirical assessment} \label{sec:xp}

The performance of CLIP-CBM algorithms strongly rely on both the initial list of concepts and the concept aggregation model. The core of HCBM is to improve the concept aggregation using the explainable and non-linear Hoeffding decomposition of XGBoost models, which can obviously be combined with any initial list of concepts.
Therefore, we run comparisons between the standard linear aggregation of CBM and the TreeHFD aggregation used in HCBM for a given dataset and initial concept list in various cases---see Table \ref{table:xp_hcbm}.
In fact, linear CBM are the state-of-the-art competitors of HCBM, since the linear aggregation is used by almost all existing CBM, with the notable exceptions of SCBM \citep{vandenhirtz2024stochastic} and CB2 \citep{atienza2024cutting}, which both report lower accuracy than the linear approach. For example for CIFAR-10, linear CBM and HCBM both achieve a classification accuracy of $0.96$ with NEC at $10$, while SCBM and CB2 respectively achieve $0.72$ and $0.91$ without sparsity constraints. The concept aggregation of CB2 is also entangled with the alignment of an additional black-box vision classifier, which is a valuable feature, but out of the article scope.
On the other hand, we compare HCBM when TreeHFD is replaced by EBM \citep{nori2019interpretml}, the most widely used algorithm to build non-linear additive models.
Importantly, post-hoc XAI methods for XGBoost such as TreeSHAP \citep{lundberg2020local}, cannot be used in HCBM instead of TreeHFD, since they cannot handle monotone constraints, 
and do not have the property of robustness to information leakage.

\begin{table}
\caption{In the upper part, classification accuracy of HCBM with TreeHFD versus linear CBM and HCBM with EBM (with standard deviations $\leq 10^{-3}$).
In the bottom part, proportion ($\%$) of irrelevant concepts with injected leakage selected by HCBM versus linear CBM and EBM (std $\leq 10^{-3}$).}
\centering
\setlength{\tabcolsep}{1pt}
\resizebox{\textwidth}{!}{
\begin{tabular}{cccccccccccccccc}
  \hline
  \hline
  NEC & \multicolumn{3}{c}{CIFAR-10} & \multicolumn{3}{c}{Waterbirds} & \multicolumn{3}{c}{EuroSAT} & \multicolumn{3}{c}{RESISC45} & \multicolumn{3}{c}{xView} \\
   & Linear & \multicolumn{2}{c}{HCBM} & Linear & \multicolumn{2}{c}{HCBM} & Linear & \multicolumn{2}{c}{HCBM} & Linear & \multicolumn{2}{c}{HCBM} & Linear & \multicolumn{2}{c}{HCBM}  \\
   & & EBM & \small{TreeHFD} & & EBM & \small{TreeHFD} & & EBM & \small{TreeHFD} & & EBM & \small{TreeHFD} & & EBM & \small{TreeHFD} \\
  \hline
   $3$ & $0.923$ & $0.831$ & $0.936$ & $0.542$ & $0.691$ & $0.692$ & $0.756$ & $0.739$ & $0.810$ & $0.673$ & $0.123$ & $0.816$  & $0.683$ & $0.925$ & $0.879$ \\
   $5$ & $0.950$ & $0.859$ & $0.955$ & $0.575$ & $0.737$ & $0.677$  & $0.825$ & $0.839$ & $0.863$ & $0.835$ & $0.172$ & $0.845$  & $0.904$ & $0.951$ & $0.943$ \\
   $10$ & $0.964$ & $0.886$ & $0.960$ & $0.678$ & $0.719$ & $0.704$ & $0.893$ & $0.900$ & $0.900$ & $0.883$ & $0.234$ & $0.875$  & $0.958$ & $0.965$ & $0.963$ \\
  \hline 
   $5$ & $43\%$ & $0\%$ & $0\%$ & $80\%$ & $20\%$  & $0\%$ & $50\%$ & $1.4\%$ & $0\%$ & $41\%$ & $64\%$ & $4\%$ & $58\%$ & $4\%$ & $0\%$ \\
   $10$ & $49\%$ & $3\%$ & $0\%$ & $90\%$ & $30\%$  & $10\%$ & $54\%$ & $1.5\%$ & $0.8\%$ & $49\%$ & $62\%$ & $13\%$ & $56\%$ & $9\%$ & $0.4\%$ \\
   \hline
   \hline
\end{tabular}}
\label{table:xp_hcbm}
\end{table}

\paragraph{Experiment settings.}
Experiments are conducted with standard image datasets, CIFAR-10 \citep{krizhevsky2009learning} and Waterbirds \citep{Sagawa2020Distributionally}, and several datasets of overhead images, that is EuroSAT \citep{helber2019eurosat} and RESISC45 \citep{cheng2017remote}, both used in previous CBM articles, and also xView \citep{lam2018xview}---see Appendix \ref{app:xp_settings} for details about datasets and concepts.
We use existing lists of concepts introduced in other CBM articles for CIFAR-10, Waterbirds, EuroSAT, and RESISC45, to focus on the comparisons of the concept aggregation models.
For xView dataset, there is no existing work about CBM, and no available list of concepts, to our best knowledge.
Therefore, we carefully craft a list of relevant concepts, following the standard approach of combining LLM generation and expert knowledge, as described in the next section dedicated to xView. A list of labeled bounding boxes is provided in the training data, and we extract crops with identified objects of three parent classes to build the image set.  
Regarding the settings of HCBM, XGBoost, TreeHFD, and EBM are fit with all default parameters, except that TreeHFD and EBM use only main effects, since including interactions leads to the same performance---see Appendix \ref{app:xp_settings} for additional details about experimental settings.
For the linear aggregation, we use the loss function originally introduced in PCBM \citep{yuksekgonul2022post}, which is the cross-entropy loss with an elastic-net penalization ($L1$-ratio of $0.99$), and a penalization strength set to meet the NEC input.
Finally, we use the balanced classification accuracy as performance metric, where each class has the same weight, even for unbalanced data.

\begin{figure*}
	\begin{center}
		\includegraphics[scale=0.335]{waterfall_highway.png}
		\hspace*{5mm}
		\includegraphics[scale=1.35]{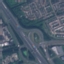}
		\caption{HCBM waterfall of concept contributions to prediction of ``Highway\_1009'' in EuroSAT.} \label{fig:waterfall_highway}
	\end{center}
\end{figure*}

\paragraph{Results.}
Experimental results are displayed in Table \ref{table:xp_hcbm}, where metrics are averaged over ten repetitions to make standard deviations negligible. Table \ref{table:xp_hcbm} shows a large accuracy improvement of HCBM over linear CBM, in particular for NEC values of $3$ and $5$.
For NEC at $10$, linear CBM and HCBM have close accuracy performance, except for the Waterbirds dataset, where HCBM outperform linear CBM. In this case, the test set has out-of-distribution data, and the higher accuracy of HCBM shows its better generalization. We also take advantage of the Waterbirds dataset to show the higher performance of HCBM with interventions---see Appendix \ref{app:interventions}.
Additionally, linear CBM show a much larger accuracy improvement for increasing NEC values than HCBM for all datasets, which is a strong evidence of information leakage, as explained in \citet{srivastava2024vlg}.
An inspection of linear CBM models confirms this problem.
For CIFAR-10 dataset with NEC at $10$, for example, linear CBM select many irrelevant positive concepts for the ``frog'' class, such as ``ratline, uropygial gland, mammal, horse’s foot, bird’s foot, feline'', which is not the case for HCBM.
\begin{wrapfigure}{r}{0.55 \textwidth}
	\begin{center}
        \includegraphics[scale=0.3]{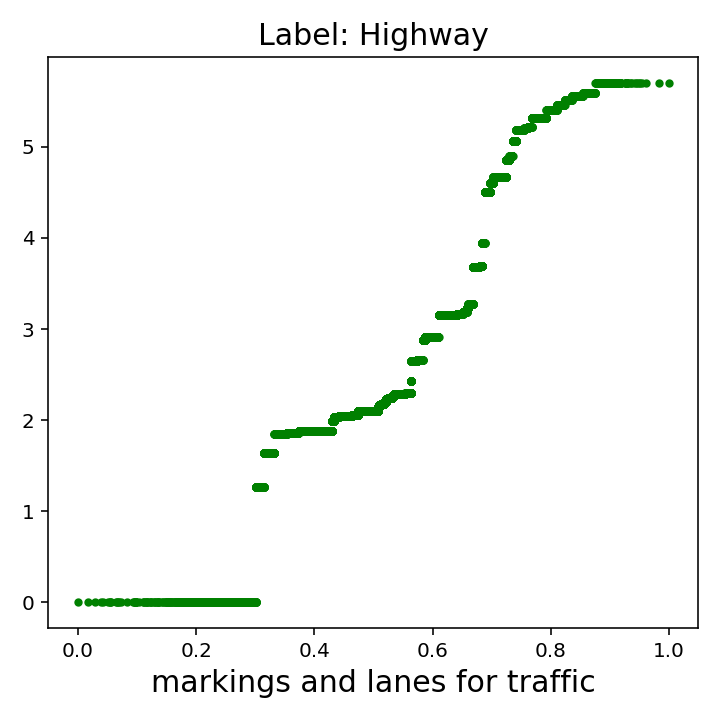} \hspace*{-0.2cm}
		\includegraphics[scale=0.3]{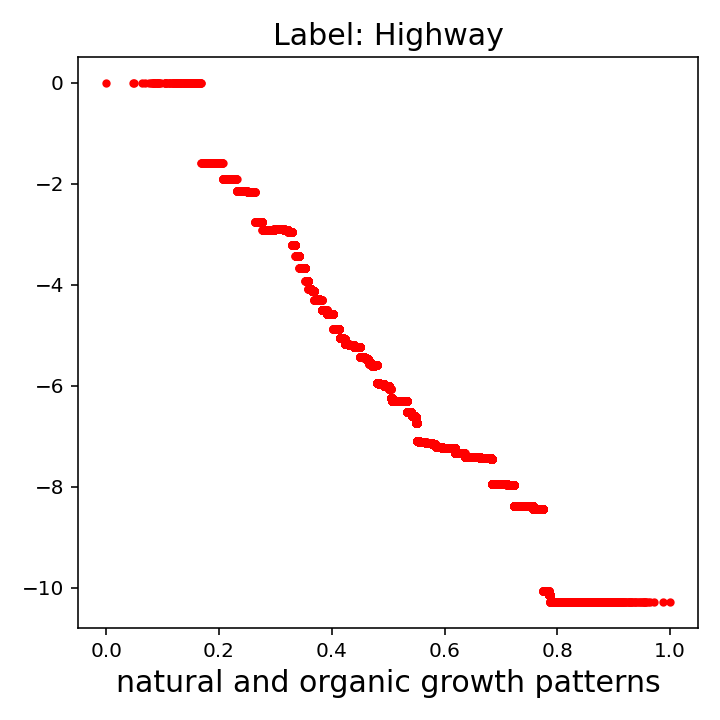}
		\caption{For EuroSAT dataset, main effects of concepts in the logit decompositions of ``Highway'' in HCBM.} \label{fig:eurosat_components}
	\end{center}
\end{wrapfigure}
For EuroSAT dataset, linear CBM also learn an increasing effect of the concepts ``streetlights and road signs'' and ``regularly maintained and paved surfaces'' on ``Sea \& Lake'' probability, even with NEC at $5$,  but not HCBM---see Appendix \ref{app:xp_results} for additional details.
Besides, HCBM variant with EBM can have a slightly better accuracy than HCBM with TreeHFD for Waterbirds and xView datasets, which both have few classes ($2$ and $3$), but at the price of the very costly default number of boosting rounds of $50 000$ for EBM, while XGBoost uses the default number of $100$ rounds for TreeHFD. 
For the other datasets with larger number of classes, 
HCBM with TreeHFD strongly outperform EBM accuracy, and are more robust to interconcept leakage, as explained in the next paragraph. In particular, EBM struggle with RESISC45, because of the high number of input concepts of $2250$.

For EuroSAT dataset, we also display the main effects of several concepts in the decomposition provided by HCBM for the logit of highway class in Figure \ref{fig:eurosat_components}, and also in Figures \ref{fig:eurosat_components_full}-\ref{fig:xview_HCBM_5} in Appendices \ref{app:xp_results}-\ref{app:xview_concepts} (with NEC $= 5$).
We observe that the identification of ``markings and lanes for traffic'' or ``on-ramps and off-ramps'' strongly increase the probability to classify the image as a highway. Such patterns are quite expected, since these elements are characteristics of highways, and can be seen from overhead images. The concept ``natural and organic growth patterns'' has a quite strong negative impact on the highway logit, since the concept is associated to most other classes, such as crops, forests, pasture, vegetation, river...
Overall, it is clear from Figures \ref{fig:eurosat_components}, and \ref{fig:eurosat_components_full}-\ref{fig:xview_HCBM_5} that the impact of concept scores on output logits is often not linear. The common behavior is a quite sharp increase or decrease when the concept score reaches a given threshold, corresponding to the identification of the concept. 
For low and high values of concept scores, the impact on logits is quite constant, which ensures a good behavior when the model is queried for images with scores outside of the training distribution support. HCBM can efficiently handle this typical behavior of concept scores, in particular in the transition area where the logit contribution strongly varies, while linear CBM or hard CBM \citep{havasi2022addressing} provide a rough approximation, with a line or a step function, respectively.

\paragraph{Information leakage.}
We show the high robustness of HCBM based on TreeHFD to interconcept leakage with the following experiments. For each dataset, we add irrelevant concepts with injected leakage, built from the scores of the original concepts and noise---see Appendix \ref{app:xp_settings} for details. Then, we run linear CBM and HCBM, and report in the bottom part of Table \ref{table:xp_hcbm} the proportion of concepts selected by each method that contain injected leakage. Linear CBM select about half of irrelevant concepts with injected leakage, whereas HCBM with TreeHFD select almost no irrelevant concepts, except for the challenging cases of RESISC45 and Waterbirds with a few percent. HCBM based on EBM also outperform linear CBM, but select more irrelevant concepts than TreeHFD, especially in the high dimensional case of RESISC45. This is quite expected since the theoretical property of robustness to interconcept leakage does not hold for EBM.

\paragraph{Local explanation of predictions.}
Prime implicants efficiently explain predictions using about half of the sets initially selected in HCBM learning, and thus provide compact explanations---see Table \ref{table:xp_linear_cbm_implicants} in Appendix \ref{app:xp_results}.
The size of prime implicants are close for linear CBM and HCBM, except for Waterbirds and RESISC45, where HCBM provide shorter concept sets than linear CBM.
In Figure \ref{fig:waterfall_highway}, we show the prediction break down with a waterfall plot for the example of image ``Highway\_1009'' in EuroSAT dataset. The concepts are ranked by decreasing order of absolute contribution for the highest logit, that is the highway class in this case. The dashed rectangles show the lowest and highest possible values that the concept contribution can take across the training data. For example, the concept of ``natural and organic growth patterns'' takes an intermediate value of $-5.46$ for our target image because of surrounding elements, but can take much lower values around $-12$ for images of the other classes that are positively associated with this concept, as opposed to the highway class. The prime-implicant concepts are highlighted in the waterfall plot with dark-green color for positive concepts and dark red for negative ones: ``on-ramps and off-ramps'', ``natural and organic growth patterns'', and ``docks, piers, and marinas'', while excluded concepts are displayed with lighter colors. The highway logit involves three other concepts, but in this case, even if they are assigned to their lowest possible value, the highway logit remains larger than all other original logits. Therefore, the prime-implicant concepts are sufficient to explain the prediction.

\section{Applications to object detection in overhead images} \label{sec:detection}

We show the high performance of HCBM with the challenging problem of object detection in overhead images with xView dataset. We first describe the adaptation of HCBM to object detection, and then analyze experimental results.

\begin{figure*}
	\begin{center}
		\includegraphics[scale=0.34]{waterfall_car.png}
		\includegraphics[scale=0.36]{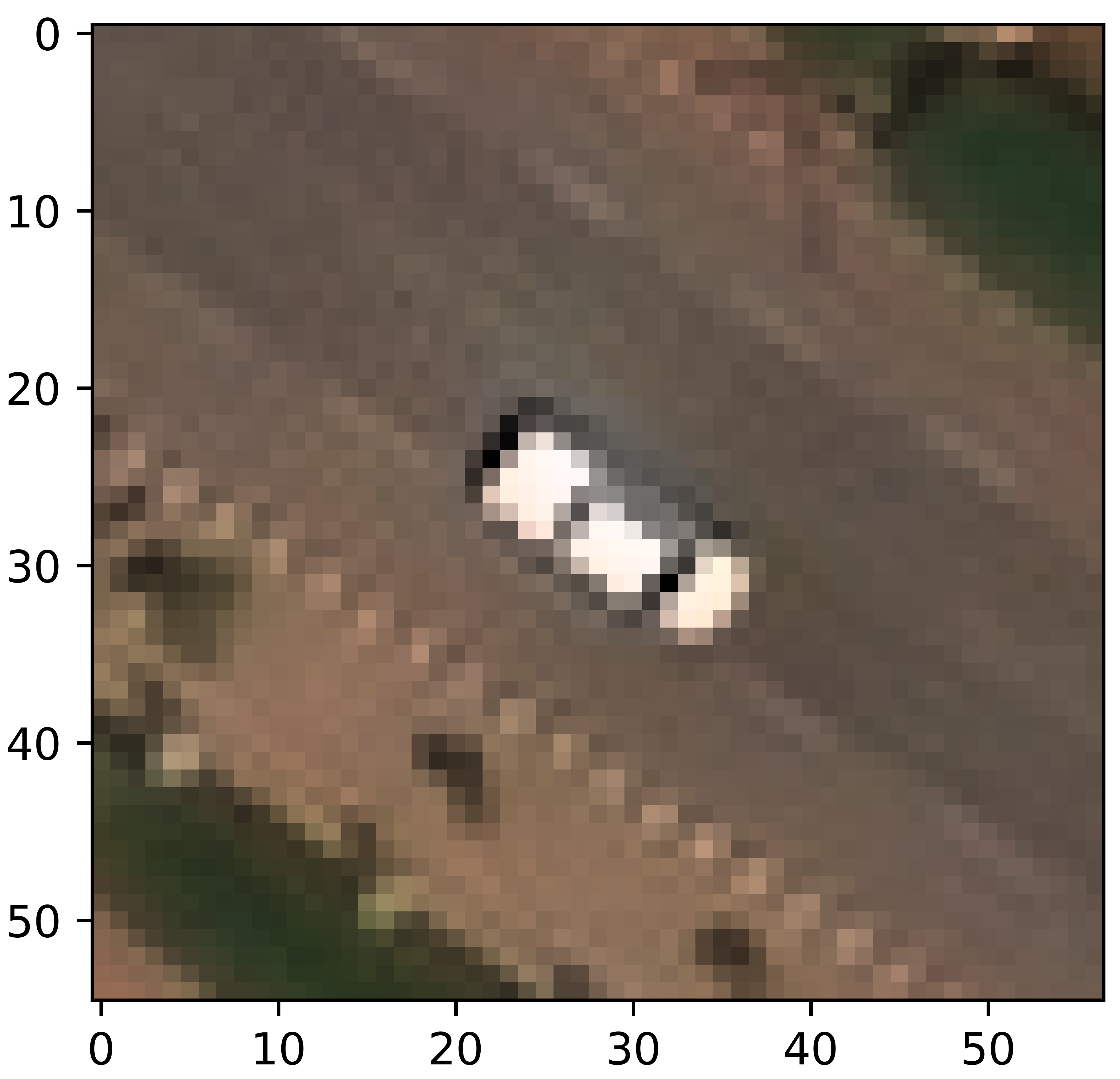}
		\caption{HCBM waterfall of concept contributions to prediction of a car in xView dataset.} \label{fig:waterfall_car}
	\end{center}
\end{figure*}

\paragraph{HCBM for object detection.}
Convolutional neural networks (CNN) have been successfully extended to handle object detection, in particular with the famous YOLO \citep{redmon2016you} and RTMDet \citep{lyu2022rtmdet} architectures. These methods perform two tasks simultaneously with the detection of bounding boxes containing targeted objects, and the classification of each detected bounding box to one of the target labels. 
We can combine HCBM with CNN detectors to explain the classification of the detected objects. In practice, the training data of HCBM is generated by a trained CNN detector, which outputs a set of crops and predicted logits for the CNN detections in raw images. Then, HCBM model is fit with this generated dataset and an initial concept list, as explained in Section \ref{sec:hcbm}, with the extension of HCBM to handle continuous outputs. More precisely, the logits predicted by the detector replace the binary outputs in Step $4$ of Algorithm \ref{algo_hcbm} in HCBM training. Then, both XGBoost and TreeHFD natively handle regression problems, and the final components of the Hoeffding decomposition take the same form as in the classification case.
The final inference pipeline consists in the combination of the CNN detector to predict the bounding boxes for a new input image, and then the classification of the associated crops are performed by the fitted HCBM. 

\paragraph{Experiments with xView dataset.}
The xView dataset is especially challenging, because many objects have very small resolutions. For example, some cars are represented by a few pixels---as illustrated in Figure \ref{fig:waterfall_car}, which makes the design of relevant concepts a difficult problem. Hence, we focus on three parent classes: vehicles, aircrafts, and maritime vessels. 
The concept list is built using LLM and expert knowledge, as mentioned in the previous section.
\begin{wraptable}{r}{0.55 \textwidth}
\caption{For HCBM and linear CBM, explained variance of each logit and classification accuracy, with respect to predictions of a RTMDet detector for xView.}
\centering
\setlength{\tabcolsep}{1pt}
\begin{tabular}{cccccc}
  \hline
  \hline
  NEC & Algorithm & \multicolumn{3}{c}{Logit Explained Variance} &  Accuracy \\
   & & Vehicles & Aircrafts & Boats &   \\
  \hline
   \multirow{2}{*}{$3$} & Linear & $0.27$ & $0.41$ & $0.21$ & $0.48$ \\
    & HCBM & $0.68$ & $0.67$ & $0.63$ & $0.88$ \\
   \hline
   \multirow{2}{*}{$5$}  & Linear & $0.41$ & $0.52$ & $0.40$ & $0.67$ \\
    & HCBM & $0.76$ & $0.72$ & $0.65$ & $0.90$ \\
   \hline
   \multirow{2}{*}{$10$} & Linear & $0.64$ & $0.66$ & $0.61$ & $0.88$ \\
    & HCBM & $0.80$ & $0.76$ & $0.73$ & $0.93$ \\
  \hline 
  \hline 
\end{tabular}
\label{table:xp_hcbm_detection}
\end{wraptable}
We extract two crops for each detected object, where the enlarged crop has a width and height ten times larger than the original bounding box, to also consider background elements, e.g., the sea or the road. Consequently, we design two types of concepts: those characterizing object attributes for the original bounding boxes, which essentially describe object geometry and colors, and those for background elements in enlarged crops. Overall, the final list contains $75$ concepts, and is provided in Appendix \ref{app:xview_concepts} for reproducibility.
Then, we run experiments for object detection with xView dataset, following the methodology defined above, and also using linear CBM instead of HCBM--see Appendix \ref{app:xview_concepts} for all setting details.
The results are displayed in Table \ref{table:xp_hcbm_detection}, with the proportion of explained variance obtained by the surrogate CBM for each logit, and the balanced classification accuracy, estimated by $5$-fold cross-validation.
We observe that HCBM strongly outperform linear CBM to reproduce the predictions of the RTMDet detector with a high accuracy. In particular, the accuracy gaps between linear CBM and HCBM are larger in this case, where we fit the continuous logits of the detector, than in the classification case of the previous section. Indeed, the non-linear aggregation of HCBM has a higher flexibility than a linear model to handle continuous outputs.
Finally, we illustrate HCBM model with the prediction of the detected car of Figure \ref{fig:waterfall_car}, displayed as a waterfall plot. We see that two positive concepts have a strong contribution to the logit of the ``vehicle'' class, corresponding to the identification of the road surrounding the car, and a ``white blurry shape on darker background''. Four other concepts have a negative contribution, since they are associated with aircrafts or maritime vessels. The three prime implicants are highlighted in the waterfall plot, and shows that the surrounding road combined with the small activations of the runway and dark hull are sufficient to achieve the accurate classification.

\section{Conclusion and limitations}

We have introduced HCBM to improve sparsity and robustness to information leakage over standard CBM with linear concept aggregations, while preserving a high accuracy. In particular, HCBM ability to remove interconcept leakage is proved.
However, HCBM share a current limitation of CBM with concept-task leakage, which may still occur for HCBM, as illustrated in Appendix \ref{app:xp_results}. Therefore, the development of procedures to generate and select concepts to eliminate concept-task leakage seems an important and promising research direction for future work.

\bibliographystyle{apalike}
\bibliography{biblio}

\begin{thebibliography}{}

\bibitem[Adebayo et~al., 2018]{adebayo2018sanity}
Adebayo, J., Gilmer, J., Muelly, M., Goodfellow, I., Hardt, M., and Kim, B.
  (2018).
\newblock Sanity checks for saliency maps.
\newblock {\em Advances in neural information processing systems}, 31.

\bibitem[Atienza et~al., 2024]{atienza2024cutting}
Atienza, N., Bresson, R., Rousselot, C., Caillou, P., Cohen, J., Labreuche, C.,
  and Sebag, M. (2024).
\newblock Cutting the black box: Conceptual interpretation of a deep neural net
  with multi-modal embeddings and multi-criteria decision aid.
\newblock In {\em Thirty-Third International Joint Conference on Artificial
  Intelligence IJCAI-24}, pages 3669--3678. International Joint Conferences on
  Artificial Intelligence Organization.

\bibitem[Benard, 2025]{benard2025tree}
Benard, C. (2025).
\newblock {Tree Ensemble Explainability through the Hoeffding Functional
  Decomposition and Tree{HFD} Algorithm}.
\newblock In {\em The Thirty-ninth Annual Conference on Neural Information
  Processing Systems}.

\bibitem[Chastaing et~al., 2012]{chastaing2012hoeffding}
Chastaing, G., Gamboa, F., and Prieur, C. (2012).
\newblock {Generalized Hoeffding-Sobol decomposition for dependent variables -
  application to sensitivity analysis}.
\newblock {\em Electronic Journal of Statistics}, 6:2420 -- 2448.

\bibitem[Chen and Guestrin, 2016]{chen2016xgboost}
Chen, T. and Guestrin, C. (2016).
\newblock {XGBoost: A scalable tree boosting system}.
\newblock In {\em Proceedings of the 22nd ACM SIGKDD International Conference
  on Knowledge Discovery and Data Mining}, pages 785--794, New York. ACM.

\bibitem[Cheng et~al., 2017]{cheng2017remote}
Cheng, G., Han, J., and Lu, X. (2017).
\newblock Remote sensing image scene classification: Benchmark and state of the
  art.
\newblock {\em Proceedings of the IEEE}, 105:1865--1883.

\bibitem[Darwiche and Hirth, 2020]{darwhiECAI20}
Darwiche, A. and Hirth, A. (2020).
\newblock On the reasons behind decisions.
\newblock In {\em Proceedings of the European Conference on Artificial
  Intelligence (ECAI 2020)}, pages 712--720, Santiago, Spain.

\bibitem[Enouen and Galhotra, 2026]{enouen2026debugging}
Enouen, E. and Galhotra, S. (2026).
\newblock Debugging concept bottleneck models through removal and retraining.
\newblock In {\em The Fourteenth International Conference on Learning
  Representations}.

\bibitem[Fel et~al., 2021]{fel2021look}
Fel, T., Cad{\`e}ne, R., Chalvidal, M., Cord, M., Vigouroux, D., and Serre, T.
  (2021).
\newblock Look at the variance! efficient black-box explanations with
  sobol-based sensitivity analysis.
\newblock {\em Advances in neural information processing systems},
  34:26005--26014.

\bibitem[Fel et~al., 2023]{fel2023craft}
Fel, T., Picard, A., Bethune, L., Boissin, T., Vigouroux, D., Colin, J.,
  Cad{\`e}ne, R., and Serre, T. (2023).
\newblock Craft: Concept recursive activation factorization for explainability.
\newblock In {\em Proceedings of the IEEE/CVF Conference on Computer Vision and
  Pattern Recognition}, pages 2711--2721.

\bibitem[Friedman, 2001]{friedman2001greedy}
Friedman, J. (2001).
\newblock Greedy function approximation: {A} gradient boosting machine.
\newblock {\em Annals of Statistics}, pages 1189--1232.

\bibitem[Grinsztajn et~al., 2022]{grinsztajn2022tree}
Grinsztajn, L., Oyallon, E., and Varoquaux, G. (2022).
\newblock Why do tree-based models still outperform deep learning on typical
  tabular data?
\newblock {\em Advances in Neural Information Processing Systems}, 35:507--520.

\bibitem[Havasi et~al., 2022]{havasi2022addressing}
Havasi, M., Parbhoo, S., and Doshi-Velez, F. (2022).
\newblock Addressing leakage in concept bottleneck models.
\newblock {\em Advances in Neural Information Processing Systems},
  35:23386--23397.

\bibitem[Helber et~al., 2019]{helber2019eurosat}
Helber, P., Bischke, B., Dengel, A., and Borth, D. (2019).
\newblock {EuroSAT: A novel dataset and deep learning benchmark for land use
  and land cover classification}.
\newblock {\em IEEE Journal of Selected Topics in Applied Earth Observations
  and Remote Sensing}, 12:2217--2226.

\bibitem[Hoeffding, 1948]{hoeffding1948}
Hoeffding, W. (1948).
\newblock {A Class of Statistics with Asymptotically Normal Distribution}.
\newblock {\em The Annals of Mathematical Statistics}, 19:293 -- 325.

\bibitem[Hooker, 2007]{hooker2007generalized}
Hooker, G. (2007).
\newblock Generalized functional anova diagnostics for high-dimensional
  functions of dependent variables.
\newblock {\em Journal of Computational and Graphical Statistics}, 16:709--732.

\bibitem[Ignatiev et~al., 2019]{ignamarAAAI19}
Ignatiev, A., Narodytska, N., and Marques-Silva, J. (2019).
\newblock Abduction-based explanations for machine learning models.
\newblock In {\em AAAI}, pages 1511--1519, Honolulu, Hawai.

\bibitem[Koh et~al., 2020]{koh2020concept}
Koh, P.~W., Nguyen, T., Tang, Y.~S., Mussmann, S., Pierson, E., Kim, B., and
  Liang, P. (2020).
\newblock Concept bottleneck models.
\newblock In {\em International conference on machine learning}, pages
  5338--5348. PMLR.

\bibitem[Krizhevsky et~al., 2009]{krizhevsky2009learning}
Krizhevsky, A., Hinton, G., et~al. (2009).
\newblock Learning multiple layers of features from tiny images.

\bibitem[Lam et~al., 2018]{lam2018xview}
Lam, D., Kuzma, R., McGee, K., Dooley, S., Laielli, M., Klaric, M., Bulatov,
  Y., and McCord, B. (2018).
\newblock xview: Objects in context in overhead imagery.
\newblock {\em arXiv preprint arXiv:1802.07856}.

\bibitem[Lengerich et~al., 2020]{lengerich2020purifying}
Lengerich, B., Tan, S., Chang, C.-H., Hooker, G., and Caruana, R. (2020).
\newblock Purifying interaction effects with the functional anova: An efficient
  algorithm for recovering identifiable additive models.
\newblock In {\em International Conference on Artificial Intelligence and
  Statistics}, pages 2402--2412. PMLR.

\bibitem[Liu et~al., 2025]{liu2025hybrid}
Liu, Y., Zhang, T., and Gu, S. (2025).
\newblock Hybrid concept bottleneck models.
\newblock In {\em Proceedings of the Computer Vision and Pattern Recognition
  Conference}, pages 20179--20189.

\bibitem[Lundberg et~al., 2020]{lundberg2020local}
Lundberg, S.~M., Erion, G., Chen, H., DeGrave, A., Prutkin, J.~M., Nair, B.,
  Katz, R., Himmelfarb, J., Bansal, N., and Lee, S.-I. (2020).
\newblock From local explanations to global understanding with explainable ai
  for trees.
\newblock {\em Nature Machine Intelligence}, 2:56--67.

\bibitem[Lyu et~al., 2022]{lyu2022rtmdet}
Lyu, C., Zhang, W., Huang, H., Zhou, Y., Wang, Y., Liu, Y., Zhang, S., and
  Chen, K. (2022).
\newblock Rtmdet: An empirical study of designing real-time object detectors.
\newblock {\em arXiv preprint arXiv:2212.07784}.

\bibitem[Midavaine et~al., 2024]{midavaine2024re}
Midavaine, N., Go, G. H.~T., Canez, D., Simion, I., and Chatterji, S. (2024).
\newblock On the reproducibility of post-hoc concept bottleneck models.
\newblock {\em Transactions on Machine Learning Research}.

\bibitem[Nguyen et~al., 2025]{nguyen2025interpretable}
Nguyen, K.~X., Li, T., and Peng, X. (2025).
\newblock Interpretable failure detection with human-level concepts.
\newblock In {\em Proceedings of the AAAI Conference on Artificial
  Intelligence}, volume~39, pages 26326--26334.

\bibitem[Nori et~al., 2019]{nori2019interpretml}
Nori, H., Jenkins, S., Koch, P., and Caruana, R. (2019).
\newblock Interpretml: A unified framework for machine learning
  interpretability.
\newblock {\em arXiv preprint arXiv:1909.09223}.

\bibitem[Oikarinen et~al., 2023]{oikarinen2023label}
Oikarinen, T., Das, S., Nguyen, L.~M., and Weng, T.-W. (2023).
\newblock Label-free concept bottleneck models.
\newblock In {\em The Eleventh International Conference on Learning
  Representations}.

\bibitem[Panousis et~al., 2024]{panousis2024coarse}
Panousis, K., Ienco, D., and Marcos, D. (2024).
\newblock Coarse-to-fine concept bottleneck models.
\newblock {\em Advances in Neural Information Processing Systems},
  37:105171--105199.

\bibitem[Parisini et~al., 2025]{parisini2025leakage}
Parisini, E., Chakraborti, T., Harbron, C., MacArthur, B.~D., and Banerji,
  C.~R. (2025).
\newblock Leakage and interpretability in concept-based models.
\newblock {\em arXiv preprint arXiv:2504.14094}.

\bibitem[Pedregosa et~al., 2011]{pedregosa2011scikit}
Pedregosa, F., Varoquaux, G., Gramfort, A., Michel, V., Thirion, B., Grisel,
  O., Blondel, M., Prettenhofer, P., Weiss, R., Dubourg, V., et~al. (2011).
\newblock Scikit-learn: Machine learning in python.
\newblock {\em Journal of Machine Learning Research}, 12:2825--2830.

\bibitem[Poeta et~al., 2023]{poeta2023concept}
Poeta, E., Ciravegna, G., Pastor, E., Cerquitelli, T., and Baralis, E. (2023).
\newblock Concept-based explainable artificial intelligence: A survey.
\newblock {\em ACM Computing Surveys}.

\bibitem[Radford et~al., 2021]{radford2021learning}
Radford, A., Kim, J.~W., Hallacy, C., Ramesh, A., Goh, G., Agarwal, S., Sastry,
  G., Askell, A., Mishkin, P., Clark, J., et~al. (2021).
\newblock Learning transferable visual models from natural language
  supervision.
\newblock In {\em International Conference on Machine Learning}, pages
  8748--8763. PMLR.

\bibitem[Rao et~al., 2024]{rao2024discover}
Rao, S., Mahajan, S., B{\"o}hle, M., and Schiele, B. (2024).
\newblock Discover-then-name: Task-agnostic concept bottlenecks via automated
  concept discovery.
\newblock In {\em European Conference on Computer Vision}, pages 444--461.
  Springer.

\bibitem[Redmon et~al., 2016]{redmon2016you}
Redmon, J., Divvala, S., Girshick, R., and Farhadi, A. (2016).
\newblock You only look once: Unified, real-time object detection.
\newblock In {\em Proceedings of the IEEE conference on computer vision and
  pattern recognition}, pages 779--788.

\bibitem[Sagawa et~al., 2020]{Sagawa2020Distributionally}
Sagawa, S., Koh, P.~W., Hashimoto, T.~B., and Liang, P. (2020).
\newblock Distributionally robust neural networks.
\newblock In {\em International Conference on Learning Representations}.

\bibitem[Santis et~al., 2026]{santis2026learning}
Santis, A.~D., Tong, S., Brambilla, M., and Kagal, L. (2026).
\newblock Learning concept bottleneck models from mechanistic explanations.
\newblock In {\em The Fourteenth International Conference on Learning
  Representations}.

\bibitem[Selvaraju et~al., 2017]{selvaraju2017grad}
Selvaraju, R.~R., Cogswell, M., Das, A., Vedantam, R., Parikh, D., and Batra,
  D. (2017).
\newblock Grad-cam: Visual explanations from deep networks via gradient-based
  localization.
\newblock In {\em Proceedings of the IEEE international conference on computer
  vision}, pages 618--626.

\bibitem[Shang et~al., 2024]{shang2024incremental}
Shang, C., Zhou, S., Zhang, H., Ni, X., Yang, Y., and Wang, Y. (2024).
\newblock Incremental residual concept bottleneck models.
\newblock In {\em Proceedings of the IEEE/CVF Conference on Computer Vision and
  Pattern Recognition}, pages 11030--11040.

\bibitem[Shih et~al., 2018]{shchdarwIJCAI18}
Shih, A., Choi, A., and Darwiche, A. (2018).
\newblock A symbolic approach to explaining bayesian network classifiers.
\newblock In {\em Proceedings of the Twenty-Seventh International Joint
  Conference on Artificial Intelligence (IJCAI 2018)}, pages 5103--5111,
  Stockholm, Sweden.

\bibitem[Simonyan et~al., 2013]{simonyan2013deep}
Simonyan, K., Vedaldi, A., and Zisserman, A. (2013).
\newblock Deep inside convolutional networks: Visualising image classification
  models and saliency maps.
\newblock {\em arXiv preprint arXiv:1312.6034}.

\bibitem[Srivastava et~al., 2024]{srivastava2024vlg}
Srivastava, D., Yan, G., and Weng, L. (2024).
\newblock {VLG-CBM}: Training concept bottleneck models with vision-language
  guidance.
\newblock {\em Advances in Neural Information Processing Systems},
  37:79057--79094.

\bibitem[Stone, 1994]{stone1994use}
Stone, C.~J. (1994).
\newblock The use of polynomial splines and their tensor products in
  multivariate function estimation.
\newblock {\em The Annals of Statistics}, 22:118--171.

\bibitem[Vandenhirtz et~al., 2024]{vandenhirtz2024stochastic}
Vandenhirtz, M., Laguna, S., Marcinkevi{\v{c}}s, R., and Vogt, J. (2024).
\newblock Stochastic concept bottleneck models.
\newblock {\em Advances in Neural Information Processing Systems},
  37:51787--51810.

\bibitem[Wah et~al., 2011]{wah2011caltech}
Wah, C., Branson, S., Welinder, P., Perona, P., and Belongie, S. (2011).
\newblock The caltech-ucsd birds-200-2011 dataset.

\bibitem[Wu et~al., 2023]{wu2023discover}
Wu, S., Yuksekgonul, M., Zhang, L., and Zou, J. (2023).
\newblock Discover and cure: Concept-aware mitigation of spurious correlation.
\newblock In {\em International Conference on Machine Learning}, pages
  37765--37786. PMLR.

\bibitem[Yan et~al., 2023]{yan2023learning}
Yan, A., Wang, Y., Zhong, Y., Dong, C., He, Z., Lu, Y., Wang, W.~Y., Shang, J.,
  and McAuley, J. (2023).
\newblock Learning concise and descriptive attributes for visual recognition.
\newblock In {\em Proceedings of the IEEE/CVF International Conference on
  Computer Vision}, pages 3090--3100.

\bibitem[Yang et~al., 2023]{yang2023language}
Yang, Y., Panagopoulou, A., Zhou, S., Jin, D., Callison-Burch, C., and Yatskar,
  M. (2023).
\newblock Language in a bottle: Language model guided concept bottlenecks for
  interpretable image classification.
\newblock In {\em Proceedings of the IEEE/CVF conference on computer vision and
  pattern recognition}, pages 19187--19197.

\bibitem[Yuksekgonul et~al., 2023]{yuksekgonul2022post}
Yuksekgonul, M., Wang, M., and Zou, J. (2023).
\newblock Post-hoc concept bottleneck models.
\newblock In {\em The Eleventh International Conference on Learning
  Representations}.

\bibitem[Zhao et~al., 2026]{zhao2025partially}
Zhao, D., Huang, Q., Yan, D., Sun, Y., and Yu, J. (2026).
\newblock {Partially Shared Concept Bottleneck Models}.
\newblock In {\em Proceedings of the 40th AAAI Conference on Artificial
  Intelligence}.

\bibitem[Zhou et~al., 2017]{zhou2017places}
Zhou, B., Lapedriza, A., Khosla, A., Oliva, A., and Torralba, A. (2017).
\newblock Places: A 10 million image database for scene recognition.
\newblock {\em IEEE transactions on pattern analysis and machine intelligence},
  40(6):1452--1464.

\end{thebibliography}


\newpage
\appendix

\section{CLIP-CBM algorithms} \label{app:biblio}

Table \ref{table:sota_cbm} provides a list of main CLIP-CBM variants.

\begin{table*}
\caption{Main CLIP-CBM variants.}
\centering
\setlength{\tabcolsep}{2pt}
\begin{tabular}{ccccc}
  \hline
  \hline
  Name & Article & \thead{Concept \\ Generation} & \thead{Concept \\ Aggregation} & \thead{Black-box \\ Alignment}  \\
  \hline
  PCBM & \citet{yuksekgonul2022post} & ConceptNet & Linear & No \\
  LF-CBM & \citet{oikarinen2023label} & LLM & Linear & Yes \\
  Labo & \citet{yang2023language} & LLM \& Selection & Linear & No \\
  LM4CV & \citet{yan2023learning} & LLM \& Selection & Linear & No  \\
  CB2 & \citet{atienza2024cutting} & LLM & NeurHCI \& Auto-Encoder & Yes \\
  SCBM & \citet{vandenhirtz2024stochastic} & LLM & Gaussian distribution & No  \\
  CF-CBM & \citet{panousis2024coarse} & LLM \& Selection & Linear & No \\
  ResCBM & \citet{shang2024incremental} & LLM \& Selection & Linear & No \\
  DN-CBM & \citet{rao2024discover} & SAE & Linear & No \\
  HybridCBM & \citet{liu2025hybrid} & LLM \& Selection & Linear & No \\
  PS-CBM & \citet{zhao2025partially} & LLM \& Selection & Linear & No \\
  CBDebug & \citet{enouen2026debugging} & LLM \& Selection & Linear & No \\
  MCBM & \citet{santis2026learning} & SAE & Linear & Partial \\
  \textbf{HCBM} & (ours)  & Any & TreeHFD & No \\
  \hline 
  \hline
\end{tabular}
\label{table:sota_cbm}
\end{table*}

\section{Concept selection} \label{app:linear_leakage}

We provide a short simulated example to show how linear aggregations of concept scores can favor interconcept leakage, while HCBM are robust to this problem.

We assume that a concept score $g^{(1)}$ is uniformly distributed over $[0, 1]$, and that the output logit $f_1$ of a given class only depends on this concept, with a typical form illustrated in the left panel of Figure \ref{fig:simulated_case}, and defined by
\begin{align*}
    f_1 = 10 \times (g^{(1)} - 0.5) \mathds{1}_{g^{(1)} > 0.5} + \varepsilon,
\end{align*}
where $\varepsilon \sim \mathcal{N}(0, 0.01)$ is an independent Gaussian noise.
We assume that a second concept score $g^{(2)}$ is a function of $g^{(1)}$ through interconcept leakage, and is defined as
\begin{align*}
    g^{(2)} = (1 - 2g^{(1)}) \mathds{1}_{g^{(1)} < 0.5} + \varepsilon_2,
\end{align*}
where $\varepsilon_2 \sim \mathcal{N}(0, 0.01)$ is also an independent Gaussian noise.
From a causal perspective, we say that $g^{(1)}$ has a causal effect on both $f_1$ and $g^{(2)}$, but that $g^{(2)}$ does not have a causal effect on $f_1$, and $f_1$ and $g^{(2)}$ are correlated because of the confounder $g^{(1)}$.

We draw a sample of size $n = 1000$ from this data generating process, and fit a linear model of $f_1$ with respect to only $g^{(1)}$. Next, we fit $f_1$ linearly using both $g^{(1)}$ and $g^{(2)}$.
In the first case, the proportion of explained variance of the output is about $-10$, since $f_1$ cannot be represented well by a linear function of $g^{(1)}$, as illustrated in the left panel of Figure \ref{fig:simulated_case}. In the second case, the proportion of explained variance increases to $0.94$, since $f_1$ is in fact a linear combination of $g^{(1)}$ and $g^{(2)}$ when the noise variables are removed. 
This toy example shows how linear aggregations can take advantage of interconcept leakage to recover an aggregation of high accuracy.

Finally, we fit TreeHFD with all default parameters to model $f_1$ with respect to $g^{(1)}$ and $g^{(2)}$, and obtain a proportion of explained variance of $0.99$. The two functional components are displayed in the right panel of Figure \ref{fig:simulated_case}. We see that the component for $g^{(1)}$ recovers the logit $f_1$ up to the intercept, and $g^{(2)}$ is very close to the null function.
Therefore, TreeHFD automatically removes the concept score $g^{(2)}$ from the aggregation, despite its strong correlation with $f_1$ and $g^{(1)}$, and HCBM are thus robust to interconcept leakage. This powerful property is a consequence of the uniqueness of the Hoeffding decomposition, enforced by the orthogonality constraints.
This example also illustrates that HCBM perform causal variable selection, when there is no hidden confounders, which is the case when the initial concept list contains all relevant concepts.

\begin{figure}
	\begin{center}
        \includegraphics[scale=0.27]{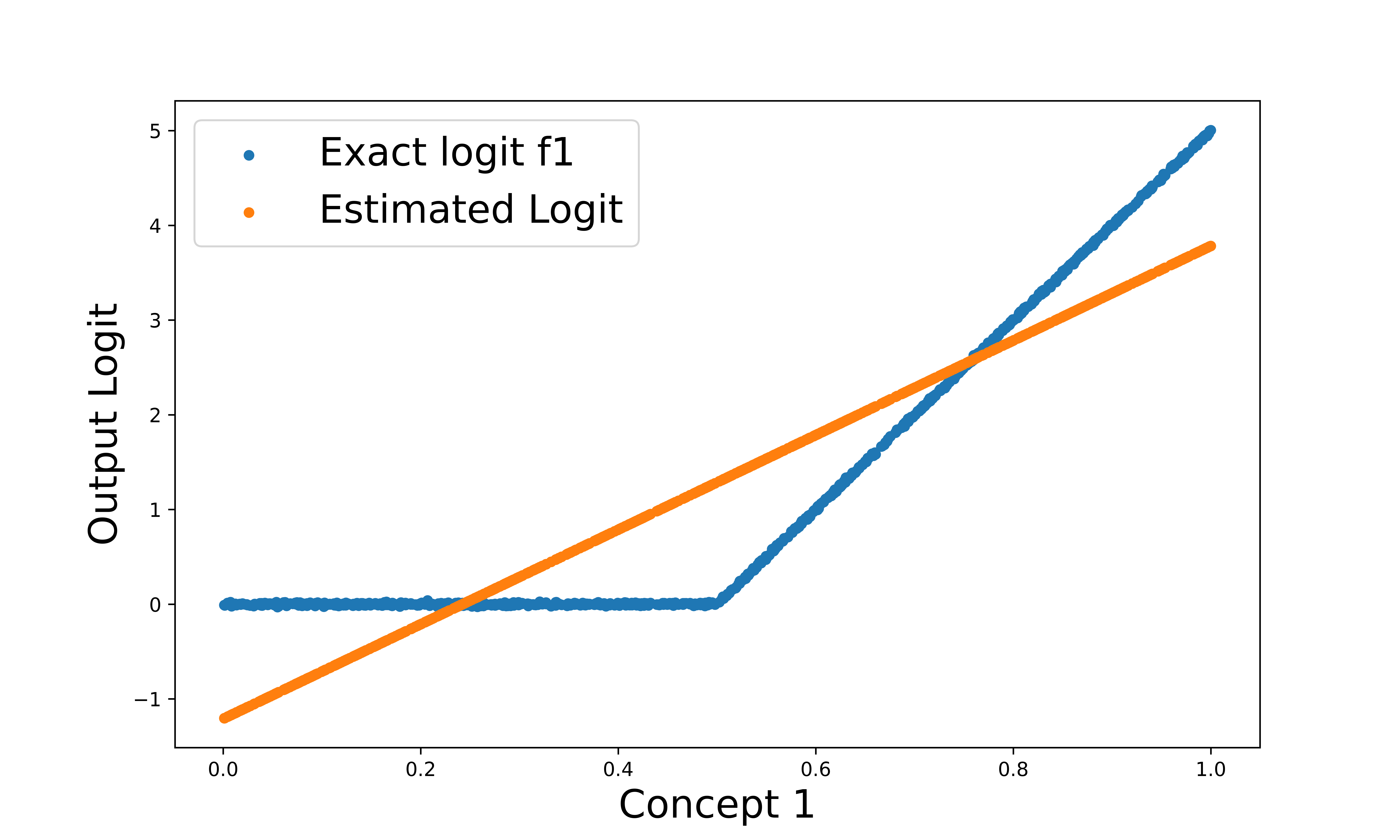} \hspace*{-0.2cm}
		\includegraphics[scale=0.27]{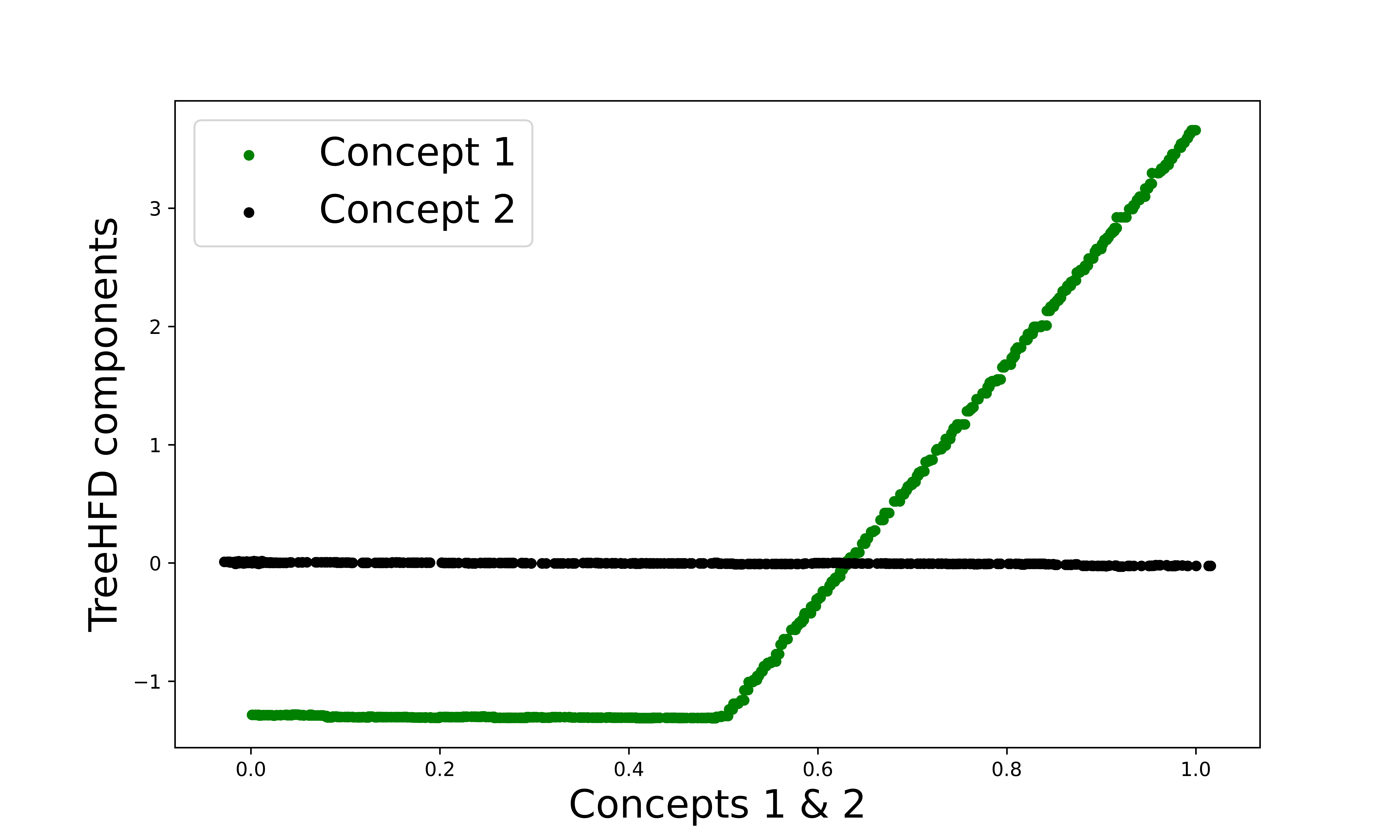}
		\caption{On the left panel, linear approximation of the logit $f_1$ with the concept score $g^{(1)}$. On the right panel, components generated by TreeHFD to fit $f_1$ with $g^{(1)}$ and $g^{(2)}$.} \label{fig:simulated_case}
	\end{center}
\end{figure}

\section{Additional details for HCBM algorithm} \label{app:hcbm}

\paragraph{Monotone constraints.}
We provide below the detailed procedure to detect increasing and decreasing components in HCBM.
Once the initial XGBoost and TreeHFD algorithms are fit (without monotone constraints), we compute the relative cumulated increase of each main effect $\smash{f_{n, \ell}^{(j)}}$ for $\smash{j \in \{1, \hdots, K\}}$ of the obtained decompositions.
More precisely, for a given component $j$, the values $\smash{\{f^{(j)}_{n, \ell}(g_n^{(j)}(\bX_i))}\}_{i=1}^n$ are sorted with respect to increasing values of the concept scores $\smash{\{g_n^{(j)}(\bX_i)\}_{i=1}^n}$, and the difference between two consecutive values is computed. Then, the positive differences are added together, and divided by the sum of all absolute differences, to obtain the relative increase of the considered component. When such ratio is higher than a threshold $r \in (1/2, 1)$, the component is set as an increasing function. On the other hand, when the ratio is below $1 - r$, the component is considered as a decreasing one. 

The default value is $r = 0.8$, and HCBM is quite insensitive to $r$ values.
Indeed, $r$ could be set to $0.7$ or $0.9$ with almost no difference in the output results. 
When is $r$ set to $0.5$, all components of the Hoeffding decomposition are forced to be monotone (decreasing or increasing), which can hurt performance, since there are often several concepts with clear non-monotone behavior. On the other hand, when $r = 1$, no monotone constraints are enforced, since a component already need to be monotone in the initial unconstrained XGBoost to be detected as monotone. Overall, setting $r$ away from $0.5$ and $1$ is critical, but the exact intermediate value has a small impact on HCBM outputs.

\paragraph{HCBM interactions.}
Including interaction terms do not change HCBM accuracy and prime implicant sizes for all datasets of the experimental section, because of the small NEC values, which systematically discard interaction terms. This is quite expected to have 10 main effects with a higher impact than interactions when there are hundreds of input concepts.

\paragraph{Normalization of CLIP scores.}
The min-max normalization of CLIP scores has no impact on XGBoost and TreeHFD involved in HCBM, since tree ensembles are invariant with respect to monotone transformations of the input variables. Indeed, node splits are orthogonal to each axis, and the same splits will be optimal regardless of the variable values, provided that the data points have the same order with respect to each variable.
The goal of the min-max normalization is to provide an easy interpretation of the CLIP scores with respect to the training data.

\section{Introduction to the Hoeffding decomposition and TreeHFD} \label{app:treehfd}

This section provides an introduction to the Hoeffding decomposition and the TreeHFD algorithm.

\subsection{Hoeffding decomposition}

The Hoeffding decomposition was originally introduced by \citet{hoeffding1948} for independent input variables, as mentioned in the introduction. In this case, all the components of the decomposition are orthogonal.
The extension to the dependent case was later achieved by \citet{stone1994use} and \citet{hooker2007generalized}, using hierarchical orthogonality constraints. These constraints imply that two components of the decomposition are orthogonal when one of the associated subsets of variables is included in the other one. Consequently, main effects and interactions are clearly separated and the obtained decomposition is highly sparse, as opposed to other additive models.

\citet{chastaing2012hoeffding} provide a clear presentation of the Hoeffding decomposition, and use it to build variable importance measures. We report Theorem $1$ from \citet{chastaing2012hoeffding} below, which is a reformulation of results from \citet{stone1994use} and \citet{hooker2007generalized}. Notice that this theorem holds for input variables with specific distributions of unbounded support, under the Condition (C2), which is not useful in our case where inputs are bounded cosine similarities. Therefore, we consider that each variable distribution has a compact support to state the Hoeffding decomposition.

\begin{theorem}[Hoeffding decomposition]
    Let $\eta$ be a square integrable function, and $\smash{\bX=(X^{(1)}, \hdots, X^{(p)})}$ be a real random vector, where $p$ is a positive integer and each variable takes values in a compact set. If the distribution of $\bX$ is bounded away from $0$ and infinity on its support, then there is a unique set of functions $\{ \eta^{(J)} \}_{J \subset \{1, \hdots, p\}}$ such that
    \begin{align*}
        \eta(\bX) = \sum_{J \subset \{1, \hdots, p\}} \eta^{(J)}(\bX^{(J)}),
    \end{align*}
    and for all $J \subset \{1, \hdots, p\}$ and $I \subsetneq J$, $\E[\eta^{(J)}(\bX^{(J)}) \mid \bX^{(I)}] = 0$.
\end{theorem}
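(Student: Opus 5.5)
The plan is to prove existence and uniqueness by induction on the subset size $|J|$. The decomposition is built from $L^2(P_{\bX})$ projections, and the density assumption enters only through one closedness/stability lemma. For $J \subset \{1,\hdots,p\}$, let $L^2_J$ denote the subspace of functions of $\bX^{(J)}$ in $L^2(P_{\bX})$, and set
\begin{align*}
H_J = \{h \in L^2_J : \E[h(\bX^{(J)}) \mid \bX^{(I)}] = 0 \text{ for all } I \subsetneq J\}.
\end{align*}
Since $\E[h \mid \bX^{(I)}] = 0$ is equivalent to $h \perp L^2_I$, we have $H_J = L^2_J \ominus \overline{\sum_{I \subsetneq J} L^2_I}$. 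The statement then amounts to showing that $L^2_{\{1,\hdots,p\}} = L^2(P_{\bX})$ is the direct sum $\bigoplus_{J} H_J$, with each $\eta \in L^2(P_{\bX})$ having a unique representation.

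\textbf{Key lemma (main obstacle).} The expected hard step is a Stone-type stability inequality: there is $c > 0$ such that for all $h_J \in L^2_J$ satisfying the hierarchical orthogonality,
\begin{align*}
\E\Big[\Big(\sum_J h_J(\bX^{(J)})\Big)^2\Big] \geq c \sum_J \E[h_J(\bX^{(J)})^2].
\end{align*}
A companion statement is that each sum $\sum_{I \subsetneq J} L^2_I$ is closed in $L^2(P_{\bX})$.

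I would prove both by comparing $P_{\bX}$ with the product of its marginals. Because each variable lives in a compact set and the density is bounded away from $0$ and $\infty$ on its support, the Radon--Nikodym derivative of $P_{\bX}$ with respect to $\otimes_j P_{X^{(j)}}$ is bounded above and below. Here I assume the support is a product of compact sets, as in Stone and in Chastaing et al.; otherwise the lower bound fails. It follows that the two $L^2$ norms are equivalent. Under the product measure, the classical Hoeffding--Sobol decomposition is fully orthogonal, which gives the inequality with constant $1$. Transporting this through the norm equivalence, and arguing inductively over $|J|$ as in Stone's Lemma 3.1, yields the inequality for $P_{\bX}$ with $c$ depending on the density bounds and $p$. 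Closedness of finite sums then follows, since an inequality of this type makes the sum map bounded below.

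\textbf{Existence.} By induction on $|J|$, I would show that $L^2_J = \sum_{I \subseteq J} H_I$. For $J = \emptyset$, take $H_\emptyset$ to be the constants. Given $h \in L^2_J$, let $r$ be its orthogonal projection onto the complement of the closed subspace $V = \sum_{I \subsetneq J} L^2_I$. Then $r \in L^2_J$, because $V \subset L^2_J$, and $r \perp L^2_I$ for every $I \subsetneq J$, so $r \in H_J$. The remainder $h - r$ lies in $V$, and by the induction hypothesis it is a sum of elements of $H_{I'}$ with $I' \subsetneq J$. Applying this with $J = \{1,\hdots,p\}$ gives $\eta = \sum_J \eta^{(J)}$ with $\eta^{(J)} \in H_J$, which is exactly the required hierarchical orthogonality.

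\textbf{Uniqueness.} Suppose there are two such decompositions. Their difference gives components $d_J \in H_J$ with $\sum_J d_J = 0$ almost surely. The stability inequality then forces $\sum_J \E[d_J^2] = 0$, so every $d_J = 0$ almost surely, and the decomposition is unique.
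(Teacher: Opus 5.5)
The paper does not prove this statement. It quotes it as Theorem~1 of Chastaing et al.\ (2012), a reformulation of Stone (1994) and Hooker (2007), and its proof of Theorem~\ref{thm:hcbm} simply invokes it. Your proposal is a correct, self-contained reconstruction of that standard route: stability inequality, closedness, inductive projection, and uniqueness from stability. What it adds over the citation is that it makes explicit the hypothesis actually needed.

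Your product-support caveat is essential, not cosmetic. The paper's paraphrase drops Chastaing et al.'s condition (C.2), which for compactly supported densities bounded above and below amounts to product support. Without it the statement is false. Take $\bX$ uniform on $[0,1]^2 \cup [1,2]^2$: the density is $1/2$ on the support and both coordinates lie in $[0,2]$. Yet $\mathds{1}\{X^{(1)} \le 1\} = \mathds{1}\{X^{(2)} \le 1\}$ a.s., so $\eta(\bX) = \mathds{1}\{X^{(1)} \le 1\}$ has two hierarchically orthogonal decompositions. Both have $\eta^{(\emptyset)} = 1/2$, with either $\eta^{(\{1\})}(x) = \mathds{1}\{x \le 1\} - 1/2$ or $\eta^{(\{2\})}(x) = \mathds{1}\{x \le 1\} - 1/2$. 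The same issue undercuts the paper's claim, in the proof of Theorem~\ref{thm:hcbm}, that (C.2) follows from compactness and the density bounds.

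The one step you should make precise is the key lemma. Your $h_J$ are hierarchically orthogonal under $P_{\bX}$, not under $Q = \bigotimes_j P_{X^{(j)}}$. So the constant-$1$ inequality under $Q$ does not apply to them, and norm equivalence alone does not transfer it. What does transfer is the top level:
\begin{itemize}
\item Write $\|\cdot\|_P, \|\cdot\|_Q$ for the two $L^2$ norms, take $a \le dP_{\bX}/dQ \le b$, and let $k$ be the largest $|J|$ with $h_J \neq 0$.
\item For $|J| = k$, the $Q$-ANOVA component of $S = \sum_J h_J$ at $J$ comes from $h_J$ alone.
\item Its $Q$-norm is $\mathrm{dist}_Q(h_J, \sum_{I \subsetneq J} L^2_I) \ge b^{-1/2}\,\mathrm{dist}_P(h_J, \sum_{I \subsetneq J} L^2_I) = b^{-1/2}\|h_J\|_P$. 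The last equality is exactly hierarchical orthogonality.
\item Hence $\|S\|_P^2 \ge (a/b)\sum_{|J|=k}\|h_J\|_P^2$.
\item The triangle inequality then bounds $\|S - \sum_{|J|=k} h_J\|_P$ by a constant times $\|S\|_P$, and the induction hypothesis on the lower levels concludes.
\end{itemize}
Finally, closedness of $\sum_{I \subsetneq J} L^2_I$ needs no stability inequality. Under $Q$ it is a finite orthogonal sum of closed subspaces, and the two norms are equivalent.
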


The Hoeffding decomposition is a theoretical result, and its computation in practice is a difficult and open problem. When inputs are independent, closed formulas of the decomposition exist, but not when inputs are dependent, as it is the case in standard learning settings. When the input distribution is known, \citet{lengerich2020purifying} build on tree ensembles to estimate the Hoeffding decomposition. However in practice, only a data sample is available, and density estimation is highly challenging even in moderate dimension. The TreeHFD algorithm \citep{benard2025tree} circumvents this limitation and estimate the Hoeffding decomposition using tree ensembles and a data sample.

\subsection{TreeHFD}

The TreeHFD algorithm is introduced in \citet{benard2025tree}, and the core principle is to discretize the Hoeffding decomposition using the tree partitions of a tree ensemble (e.g., gradient-boosting models, random forests...).

Hence, the main result is the adaptation of the Hoeffding decomposition for the piecewise-constant functions involved in tree ensembles. If the hierarchical orthogonality constraints are discretized over the partition of a given tree, there is a unique decomposition of this tree, where components are all piecewise constant on the tree partition---see Theorem $2$ in \citet{benard2025tree}. Theorem $6$ shows that the approximation induced by the discretization is small, provided that the tree ensemble is accurate, and the input distribution does not vary too much over the cells of the tree partitions.

Then, the components of the decomposition are parametrized with a coefficient for each constant part, and TreeHFD solves a quadratic problem for each tree, to find the optimal coefficients which satisfy the discretized orthogonality constraints and the decomposition of the tree predictor. In practice, the cost function is built using the input data sample and the tree partition. The final decomposition is obtained by adding the decompositions of all trees for each variable subset.
Theorem $7$ shows that TreeHFD converges towards the Hoeffding decomposition with discretized orthogonality constraints when the size of the training data grows.

Experiments with simulated and real data show that TreeHFD accurately estimates the decomposition of XGBoost models using main effects and second-order interactions: the decomposition residuals are about $1\%$ of the initial boosting model variance, and the orthogonality constraints are well approximated, since correlation coefficients between interactions and the associated main effects are below $5\%$ in all cases.

\section{Additional experimental settings} \label{app:xp_settings}

\paragraph{Datasets and concepts.}
We provide all details about the tested datasets: CIFAR-10 \citep{krizhevsky2009learning}, EuroSAT \citep{helber2019eurosat}, RESISC45 \citep{cheng2017remote}, Waterbirds dataset~\citep{Sagawa2020Distributionally}, and xView \citep{lam2018xview}.

CIFAR-10 is a famous dataset used in computer-vision benchmarks, with $60 000$ images and $10$ classes of vehicles and animals, split in training and testing sets ($83 \% / 17 \%$).  We use the original list of concepts from PCBM \citep{yuksekgonul2022post}, built with ConceptNet, which contains $175$ concepts.
EuroSAT is a dataset of $27 000$ satellite images with $10$ classes of land use and land cover elements. We use the list of $100$ concepts generated in \citet{nguyen2025interpretable}. Since we found no existing train/test split, we use $5$-fold cross-validation to assess model performance.
RESISC45 is a dataset of $31 500$ satellite images with $45$ classes of aircrafts, boats, buildings, and land cover elements. We use the train/test split and the list of selected concepts from the established CBM Labo \citep{yang2023language}. This list contains $50$ concepts for each class, for a total number of $2250$ concepts.
Waterbirds is a dataset of $4 795$ training images, $1 119$ validation images, and $5 794$ test images of birds. This dataset is constructed by superimposing bird crops from CUB~\citep{wah2011caltech} onto background scenes from Places~\citep{zhou2017places}. Waterbirds is designed to evaluate a model reliance on spurious correlations: during training, landbirds and waterbirds are predominantly paired with their typical land and water backgrounds respectively. However, these correlations are removed in the test set, creating a distribution shift which penalizes models relying on environmental cues.
All concept lists can be found in the Github repository of the corresponding articles.
For xView dataset, most details are provided in the article. Since the validation data of xView has no annotated bounding boxes, we use $5$-fold cross-validation with the training data to assess model performance, similarly to EuroSAT.

\paragraph{CLIP.}
For CLIP \citep{radford2021learning}, we use the version ``clip-vit-large-patch14'', downloaded at \url{https://huggingface.co/openai/clip-vit-large-patch14}.

\paragraph{Linear CBM learning.}
To fit the linear aggregation of CBM, we use the ``saga'' solver provided in \texttt{scikit-learn} \citep{pedregosa2011scikit} to solve the multiclass logistic regression problem, with the elastic-net penalization originally introduced in PCBM \citep{yuksekgonul2022post}, that is an $L1$-ratio of $0.99$, with a penalization strength set to meet the NEC input.

\paragraph{EBM.}
EBM are probably the most widely used algorithm to build non-linear additive models \citep{nori2019interpretml}. Hence, EBM can replace XGBoost and TreeHFD in HCBM framework to provide an additive and non-linear aggregation of concept scores. Similarly to XGBoost and TreeHFD, EBM are fit with all default parameters in all experiments of the article, except for RESISC45. Indeed, EBM use a very large number of boosting rounds by default of $50 000$, combined with a low learning rate, and  train on one feature at a time in round-robin fashion. In the case of RESISC45 with $2250$ input concepts, the training cost of EBM becomes prohibitive because of this large number of boosting rounds and number of concepts. Therefore, we reduced the number of boosting rounds to $1000$ and increased the learning rate accordingly ($\eta = 0.75$), and parallelized runs over $32$ cores to avoid jobs exceeding a week of compute.

\paragraph{Information leakage experiments.}
We provide additional details for the experiments with injected leakage.
As explained in the article, we add irrelevant concepts with injected leakage for each dataset, built from the scores of the original concepts and noise.
More precisely, for each concept of the initial list, we add another concept, where its CLIP score is set as the CLIP score of the original concept, truncated for the highest $20\%$ values (clipped to the quantile $0.8$), and a standard Gaussian noise is added. Hence, such new concepts suffer from strong interconcept leakage by construction, but some relevant information is removed for high concept activations, and noise is added.
Then, linear CBM and HCBM are learned with these additional concepts, and we compute the proportion of concepts selected by each method that contain injected leakage, with the results reported in the bottom part of Table \ref{table:xp_hcbm}. While linear CBM select about half of irrelevant concepts for all datasets, HCBM selects almost no irrelevant concepts.

\paragraph{Compute resources.}
All experiments where conducted with a standard slurm HCP cluster, made of machines with $32$ cores at $2.8$ GHz and $384$ GB of RAM.

\paragraph{Software license.}
HCBM is based on XGBoost and TreeHFD. Consequently, \texttt{xgboost} and \texttt{treehfd} software are used in the experiments, in accordance with their Apache License 2.0.

\section{Proofs} \label{app:proof}

\begin{proof}[Proof of Theorem \ref{thm:hcbm}]
    The core of the proof is to apply Theorem $1$ from \citet{chastaing2012hoeffding} to the function $f_{\ell}$ with respect to the input variables $g^{(1)}(\bX), \hdots, g^{(K)}(\bX)$,  for each label $\ell \in \{1, \hdots, L\}$.

    By assumption, the functions $f_{1}, \hdots, f_{L}$ are square integrable, each concept score takes value in a compact set, and the distribution of $g(\bX)$ is bounded away from $0$ and infinity on its support. Therefore, Assumption (C.2) of \citet{chastaing2012hoeffding} is satisfied, and we can apply Theorem $1$ from \citet{chastaing2012hoeffding} to $f_{\ell}$, which states the existence of a unique set of functions $\smash{\{f_{\ell}^{(J)}\}_{J \subset \{1, \hdots, K \}}}$ for each label $\ell \in \{1,\hdots,L\}$, such that
    \begin{align*}
        f_{\ell}(g(\bX)) = \sum_{J \subset \{1, \hdots, K \}} f_{\ell}^{(J)}(g^{(J)}(\bX)),
    \end{align*}
    and for $I \subset J$ with $I \neq J$, we have $\smash{\E[f_{\ell}^{(J)}(g^{(J)}(\bX)) \mid g^{(I)}(\bX)] = 0}$.
\end{proof}

\begin{proof}[Proof of Theorem \ref{thm:hfd_irrelevant}]
    Let the assumptions of Theorem \ref{thm:hcbm} be satisfied: the functions $f_{1}, \hdots, f_{L}$ are square integrable, each concept score takes value in a compact set, and the distribution of $g(\bX)$ is bounded away from $0$ and infinity on its support.
    The subset $J_{\ell}^{\star}$ is defined as the smallest subset such that $\P(Y = \ell \mid g(\bX)) = \P(Y = \ell \mid g^{(J_{\ell}^{\star})}(\bX))$. Since the density of $g(\bX)$ is strictly positive on its compact support, concepts are not redundant: no concept score can be written as a deterministic function of other concept scores. This implies that $J_{\ell}^{\star}$ is unique.
    
    The second part of the proof relies on the uniqueness of the Hoeffding functional decomposition.
    By assumption, the Hoeffding decomposition of $f_{\ell}(g(\bX))$ writes
    \begin{align*}
        f_{\ell}(g(\bX)) = \sum_{J \subset \{1, \hdots, K\}} f_{\ell}^{(J)}(g^{(J)}(\bX)).
    \end{align*}
    By definition of $J_{\ell}^{\star}$, we have $\P(Y = \ell \mid g(\bX)) = \P(Y = \ell \mid g^{(J_{\ell}^{\star})}(\bX))$, and consequently
    \begin{align*}
        f_{\ell}(g(\bX)) = \mathrm{logit}(\P(Y = \ell \mid g^{(J_{\ell}^{\star})}(\bX))).
    \end{align*}
    The Hoeffding decomposition of the function $\mathrm{logit}(\P(Y = \ell \mid g^{(J_{\ell}^{\star})}(\bX)))$ only involves functions with the components of $g^{(J_{\ell}^{\star})}$ by construction.
    Since the Hoeffding decomposition is unique, we conclude that for all $J \not \subset J_{\ell}^{\star}$,
    \begin{align*}
        f_{\ell}^{(J)}(g^{(J)}(\bX)) = 0 \quad \mathrm{a.s.}
    \end{align*}
\end{proof}

\section{Additional experiments}  \label{app:xp_results}
\FloatBarrier

\subsection{Prime implicants}

Table \ref{table:xp_linear_cbm_implicants} provides the average size of prime implicants for HCBM and linear CBM. We observe that the size of prime implicants are close for HCBM and linear CBM, except for Waterbirds where HCBM implicants are two times smaller than those of linear CBM, and RESISC45 with NEC=10, where HCBM explain predictions with an average of $4$ concepts, versus $6$ for linear CBM.
\begin{table}
\caption{Average number of prime implicants for HCBM predictions (top) and linear CBM (bottom), with standard deviations in brackets when $>10^{-6}$.}
\centering
\setlength{\tabcolsep}{4pt}
\begin{tabular}{cccccc}
  \hline
  \hline
  NEC & CIFAR-10 & Waterbirds & EuroSAT & RESISC45 & xView \\ 
  \hline
   $3$ & $2.1$ & $1.7$ & $2.4$ \tiny{($7.10^{-3}$)} & $1.9$  & $2.4$ \tiny{($2.10^{-2}$)} \\ 
   $5$ & $3.0$ & $2.3$ & $3.7$ \tiny{($9.10^{-3}$)} & $2.5$  & $3.3$ \tiny{($3.10^{-2}$)} \\
   $10$ &  $5.3$ & $4.6$ & $6.8$ \tiny{($1.10^{-2}$)} & $3.9$  & $5.6$ \tiny{($5.10^{-2}$)}  \\
    \hline
   $3$ & $1.9$ & $2.9$ & $2.2$ \tiny{($3.10^{-3}$)} & $2.3$ \tiny{($3.10^{-5}$)} & $2.4$ \tiny{($1.10^{-2}$)} \\ 
   $5$ & $3.0$ & $4.4$ & $3.2$ \tiny{($5.10^{-3}$)} & $3.4$ \tiny{($8.10^{-5}$)} & $2.7$ \tiny{($9.10^{-3}$)} \\
   $10$ &  $5.0$ \tiny{($9.10^{-6}$)} & $9.2$ & $6.6$ \tiny{($4.10^{-3}$)} & $5.9$ \tiny{($9.10^{-5}$)} & $5.2$ \tiny{($5.10^{-3}$)}  \\
  \hline 
  \hline 
\end{tabular}
\label{table:xp_linear_cbm_implicants}
\end{table}

We highlight that local explanations based on prime implicants can be generalized in several ways. First, instead of comparing the two highest logits, we can replace $f_{n, \pi(2)}(g(\bX))$ by any other label logit, to select the smallest set of concepts which is sufficient to explain that $\pi(1)$ is predicted over $\ell$, with $\ell \in \{1, \hdots, L\} \setminus \pi(1)$. Secondly, we can also generate contrastive explanations: why is the predicted class not another class $\pi(k)$ for $k > 1$? In this case, a prime implicant $S(\bX)$ is a subset of concepts such that the value of $f_{n, \pi(k)}(g(\bX))$ remains below $f_{n, \pi(1)}(g(\bX))$, when we replace the components of concepts outside of $S(\bX)$ by the largest possible values. Such prime implicants can also be efficiently found using the previous approach. Finally, we can replace $\smash{f_{n, \pi(2)}(g(\bX))}$ by any threshold. For example, we can consider values between $f_{n, \pi(1)}(g(\bX))$ and $f_{n, \pi(2)}(g(\bX))$ to be more conservative and robust to uncertainty.

\subsection{EuroSAT dataset}

Figure \ref{fig:eurosat_components_full} is an augmented version of Figure \ref{fig:eurosat_components}, to display main effects of HCBM for highway class in EuroSAT dataset, with the other concept components in Figure \ref{fig:eurosat_components_full_2}.

\begin{figure}
	\begin{center}
		\includegraphics[scale=0.37]{hcbm_Highway_natural_and_organic_growth_patterns.png} \hspace*{-0.3cm}
        \includegraphics[scale=0.37]{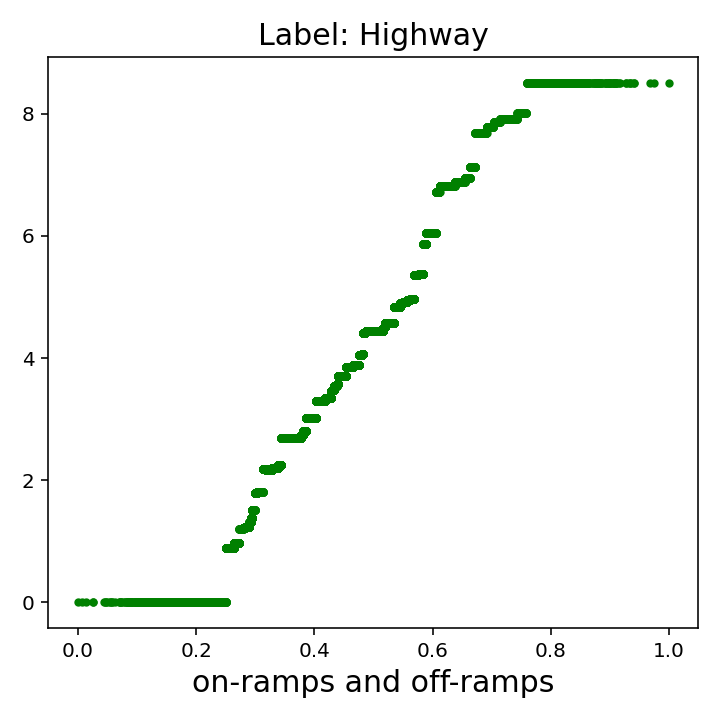} \hspace*{-0.3cm}
        \includegraphics[scale=0.37]{hcbm_Highway_markings_and_lanes_for_traffic.png}
		\caption{For EuroSAT dataset, main effects of concepts in the logit decompositions of ``Highway'' in HCBM model (NEC $= 5$).} \label{fig:eurosat_components_full}
	\end{center}
\end{figure}
\begin{figure}
	\begin{center}
		\includegraphics[scale=0.37]{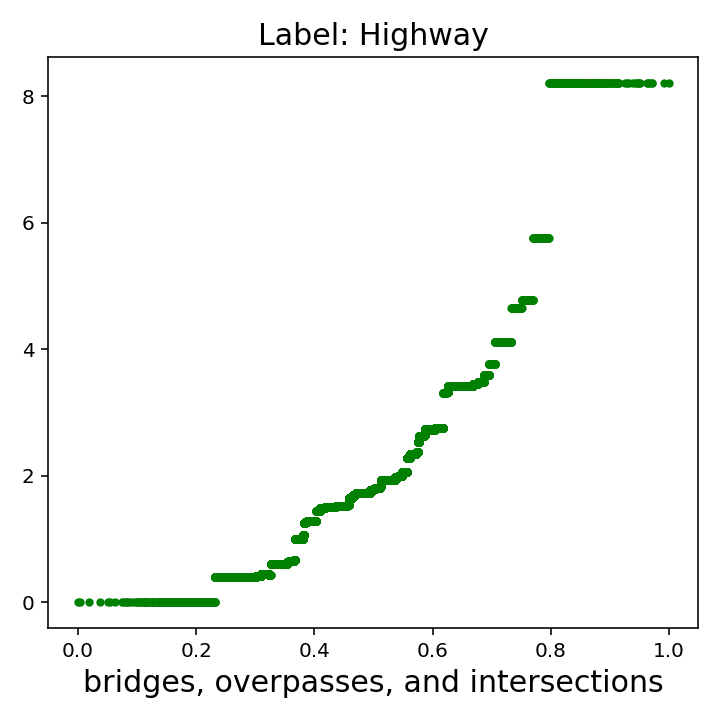} \hspace*{-0.3cm}
        \includegraphics[scale=0.37]{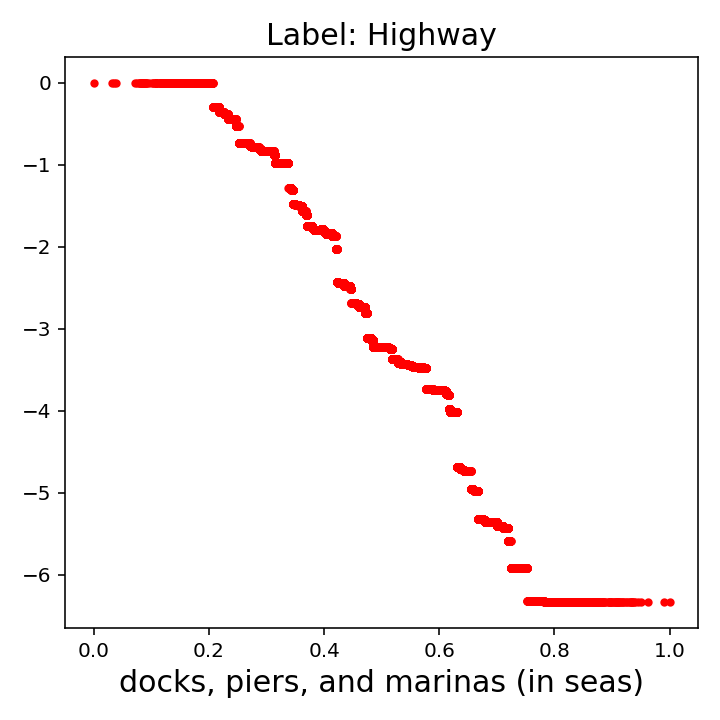} \hspace*{-0.3cm}
        \includegraphics[scale=0.37]{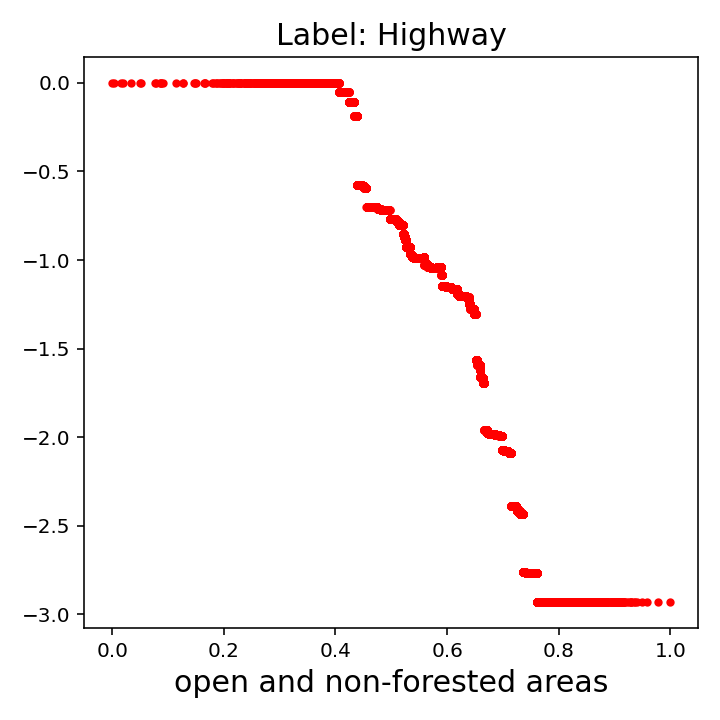}
		\caption{For EuroSAT dataset, main effects of concepts in the logit decompositions of ``Highway'' in HCBM model (NEC $= 5$).} \label{fig:eurosat_components_full_2}
	\end{center}
\end{figure}

For EuroSAT dataset, we display the lists of concepts selected by linear CBM and HCBM with NEC at $5$ in Tables \ref{table:eurosat_concepts_linear} and \ref{table:eurosat_concepts_hcbm}, respectively. The concepts \textcolor{teal}{positively} associated with the considered labels are in \textcolor{teal}{green}, and in \textcolor{red}{red} for \textcolor{red}{negative} ones.
For most classes, selected concepts are meaningful and similar for linear CBM and HCBM, as illustrated with ``Residential'', ``Highway'', and ``River''.  However, for ``Sea \& Lake'' and ``Forest'' labels, several concepts should not be positively associated with these labels, and are highlighted with a bold font in Tables \ref{table:eurosat_concepts_linear} and \ref{table:eurosat_concepts_hcbm}. For linear CBM, we observe the positive concepts of ``streetlights and road signs'' and ``regularly maintained and paved surfaces'' for ``Sea \& Lake'', as mentioned in the article. For both linear CBM and HCBM, we observe concepts related to water surfaces, since the blue-green color of forests may induce a spurious association in the VLM.
Therefore, information leakage is more severe for linear CBM than HCBM, but can also occur for HCBM because of limitations of the VLM.

\begin{table}
\caption{For EuroSAT dataset, list of \textcolor{teal}{positive} and \textcolor{red}{negative} concepts in linear CBM model (NEC $=5$).}
\centering
\setlength{\tabcolsep}{2pt}
\begin{tabular}{c >{\color{teal}}l >{\color{red}}l}
  \hline
  \hline
  Class & Positive Concepts & Negative Concepts \\
  \hline
   \multirow{3}{*}{Residential} & urban and suburban housing areas & \\
    & clustered or spaced residential structures & \\
    & community and neighborhood environments & \\
  \hline
    \multirow{3}{*}{Highway} & on-ramps and off-ramps & natural and organic growth patterns \\
    & markings and lanes for traffic & vast water horizons and shorelines \\
    & bridges, overpasses, and intersections &  \\
  \hline
     \multirow{3}{*}{River} & bridges and crossings over rivers & \begin{tabular}{l}forest floor with undergrowth and \\ fallen leaves\end{tabular} \\
     & boating and water recreational activities  & wildflowers and meadows \\
     & docks, piers, and marinas (in seas) & streetlights and road signs \\
  \hline
  \multirow{4}{*}{Sea \& Lake} & \textbf{streetlights and road signs} & lush and densely wooded areas \\
     & \textbf{regularly maintained and paved surfaces} & surrounding urban or rural landscapes  \\
     & aquatic wildlife and marine habitats & open land used for grazing livestock \\
     & different shades of blue or green water & \\
  \hline
   \multirow{4}{*}{Forest} & forest floor with undergrowth and fallen leaves& \\
   & scenic and serene natural landscapes & \\
   & \textbf{aquatic wildlife and marine habitats} & \\
   & tall trees with canopies & \\
  \hline 
  \hline 
\end{tabular}
\label{table:eurosat_concepts_linear}
\end{table}

\begin{table}
\caption{For EuroSAT dataset, list of \textcolor{teal}{positive} and \textcolor{red}{negative} concepts in HCBM model (NEC $=5$).}
\centering
\setlength{\tabcolsep}{2pt}
\begin{tabular}{c >{\color{teal}}l >{\color{red}}l}
  \hline
  \hline
  Class & Positive Concepts & Negative Concepts \\
  \hline
   \multirow{3}{*}{Residential} & urban and suburban housing areas & \\
    & clustered or spaced residential structures & \\
    & community and neighborhood environments & \\
    \hline
    \multirow{3}{*}{Highway} & on-ramps and off-ramps & natural and organic growth patterns \\
    & markings and lanes for traffic & docks, piers, and marinas (in seas)\\
    & bridges, overpasses, and intersections & open and non-forested areas \\
  \hline
    \multirow{4}{*}{River} & meandering or straight river courses & \\
     & bridges and crossings over rivers \\
     & flora and fauna along riverbanks \\
     & docks, piers, and marinas (in seas) \\
  \hline
  \multirow{3}{*}{Sea \& Lake} & aquatic wildlife and marine habitats & distinctive patterns of planted rows  \\
     & \begin{tabular}{l}expansive bodies of saltwater (sea) or \\ freshwater (lake)\end{tabular} & bridges and crossings over rivers \\
     & often found in grasslands and meadows& \\
  \hline
   \multirow{5}{*}{Forest} & lush and densely wooded areas & \begin{tabular}{l}water sources such as ponds or \\ troughs\end{tabular} \\
   & natural and organic growth patterns & \\
   & forest floor with undergrowth and fallen leaves & \\
   & scenic and serene natural landscapes & \\
   & \textbf{recreational activities (swimming, boating...)} & \\
  \hline 
  \hline 
\end{tabular}
\label{table:eurosat_concepts_hcbm}
\end{table}

\subsubsection{CIFAR-10 dataset}

Tables \ref{table:cifar_concepts_linear} and \ref{table:cifar_concepts_hcbm} display the concepts respectively selected by linear CBM and HCBM for NEC at $10$. For linear CBM, $13$ selected concepts positively associated with the classes are clearly irrelevant, and only $4$ for HCBM.

\begin{table}
\caption{For CIFAR-10 dataset, list of \textcolor{teal}{positive} and \textcolor{red}{negative} concepts in linear CBM model (NEC $=10$).}
\centering
\setlength{\tabcolsep}{2pt}
\begin{tabular}{c >{\color{teal}}l >{\color{red}}l}
  \hline
  \hline
  Class & Positive Concepts & Negative Concepts \\
  \hline
  Airplane & \begin{tabular}{l} heavier-than-air craft, propellers, fuselage, \\ flying fast, wing, the wing, two wings \end{tabular}  & galley, bilge pump \\
  \hline 
  Car & \begin{tabular}{l} motor vehicle, gear shift, \textbf{mean}, adornment, \\ wheels, hood, the wing, trunk, several wheels, \\ fender  \end{tabular} & \begin{tabular}{l} rudder, withers, vessel, flap, \\ wishbone, galley  \end{tabular} \\
  \hline 
  Bird & \begin{tabular}{l} \textbf{ruminant}, feather, two legs, flap, bird's foot, \\ domestic animal, beak, majestic, vertebrate  \end{tabular} & bulkhead, paws \\
  \hline 
    Cat & whisker, paws, gray, \textbf{radiator, galley}, feline & \\
  \hline 
    Deer & mammal, antler, ruminant, yearling, rising early, \textbf{foal} & oyster, giblet \\
  \hline 
    Dog & giblet, wishbone, \textbf{fender}, canine & vehicle type, syrinx \\
  \hline 
    Frog & \begin{tabular}{l} \textbf{ratline, uropygial gland, mammal}, amphibian, \\ skeleton, \textbf{horse's foot, bird's foot, feline}, \\ vertebrate  \end{tabular} &  \begin{tabular}{l} emergency brakes, gyrostabilizer, \\ porthole, majestic, black  \end{tabular} \\
  \hline 
    Horse & equine, stallion, horseback, horsemeat & \\
  \hline 
    Ship & \begin{tabular}{l} hull, forecastle, navigation light, seanchor, \\ the rudder, galley, bilge pump  \end{tabular} & domestic animal \\
  \hline 
    Truck & \begin{tabular}{l} cargo area, vehicle type, stabilizer bar, vehicle, \\ \textbf{gymnastic apparatus}, many wheels, \\ emergency brakes, tailgate, exhaust pipe, fuel tank  \end{tabular} & \begin{tabular}{l} adornment, the wing, horseback, \\ mute, the tail, lubber's hole, \\ nice friend, riding bitt  \end{tabular}\\
  \hline 
  \hline 
\end{tabular}
\label{table:cifar_concepts_linear}
\end{table}

\begin{table}
\caption{For CIFAR-10 dataset, list of \textcolor{teal}{positive} and \textcolor{red}{negative} concepts in HCBM model (NEC $=10$).}
\centering
\setlength{\tabcolsep}{2pt}
\begin{tabular}{c >{\color{teal}}l >{\color{red}}l}
  \hline
  \hline
  Class & Positive Concepts & Negative Concepts \\
  \hline
  Airplane & \begin{tabular}{l} heavier-than-air craft, propellers, \textbf{teeth}, \\ fuselage, two wings \end{tabular}  & pet, winch, two legs, domestic animal \\
  \hline 
  Car & \begin{tabular}{l} motor vehicle, gear shift, adornment, hood, \\ the wing, \textbf{brains}, drive train, lubber's hole  \end{tabular} & \begin{tabular}{l} pet, furcula, rudder, vessel, flap, \\ gymnastic apparatus, wishbone  \end{tabular} \\
  \hline 
  Bird & \begin{tabular}{l} pennon, feather, flap, bird's foot, beak  \end{tabular} & \begin{tabular}{l} bulkhead, cargo area, paws, \\ antler, two ears, amphibian \end{tabular} \\
  \hline 
    Cat & whisker, paws, feline &  \begin{tabular}{l} amphibian, vehicle type, beak, \\ majestic, canine \end{tabular} \\
  \hline 
    Deer & \begin{tabular}{l} mammal, antler, two ears, ruminant, an antler, \\ yearling, rising early, \textbf{syrinx} \end{tabular} & oyster, amphibian \\
  \hline 
    Dog & paws, wishbone, canine & \begin{tabular}{l} windshield, withers, vehicle type, \\ feline, vertebrate \end{tabular} \\
  \hline 
    Frog & amphibian, brains, clutch &  emergency brakes, porthole, majestic \\
  \hline 
    Horse & equine, stallion, horse's foot, horseback & antler, amphibian \\
  \hline 
    Ship & \begin{tabular}{l} vessel, superstructure, forecastle, \\ navigation light, seanchor, galley, bilge pump \end{tabular} & flap, an animal, domestic animal \\
  \hline 
    Truck & \begin{tabular}{l} pennon, vehicle, flap, \textbf{gymnastic apparatus}, \\ many wheels, tailgate, gear \end{tabular} & \begin{tabular}{l} propellers, wing, the wing, the tail, \\ lubber's hole, nice friend, bay, \\ riding bitt  \end{tabular}\\
  \hline 
  \hline 
\end{tabular}
\label{table:cifar_concepts_hcbm}
\end{table}

\section{Interventions} \label{app:interventions}

One of the main features of Concept Bottleneck Models is that users can manually intervene on concept bottlenecks to improve models. In this section, we assess the effectiveness of HCBM with interventions on the Waterbirds dataset~\citep{Sagawa2020Distributionally}. 

\paragraph{Waterbirds.} 
This dataset was constructed by superimposing bird crops from CUB~\citep{wah2011caltech} onto background scenes from Places~\citep{zhou2017places}. Waterbirds is designed to evaluate a model's reliance on spurious correlations: during training, landbirds and waterbirds are predominantly paired with their typical land and water backgrounds respectively. However, these correlations are removed in the test set, creating a distribution shift which penalizes models relying on environmental cues. 
Following recent works~\citep{rao2024discover,enouen2026debugging}, we investigate how targeted interventions affect worst-group performance. Specifically, we first test whether relying only on bird-specific concepts leads to a performance gain. On the other hand, we also evaluate if ablating birds-related concepts results in a performance drop in worst-group classification. 
To do this, we use \citep{enouen2026debugging} concept set constructed from a synthetic concept set~\citep{wu2023discover} and the attributes from CUB, translated to natural language, resulting in 533 concepts, available at \url{https://github.com/ericenouen/cbdebug/blob/main/pcbm/concepts/Waterbirds.txt}.
\paragraph{Training setup.} We first train our HCBM (NEC $=10$) for the binary classification task of the Waterbirds dataset (\emph{i.e.} Landbird versus Waterbird), achieving a performance of 70.42\%. 
\paragraph{Interventions.} Following~\citep{enouen2026debugging} setup, we then observe for each class the concepts selected by our HCBM  (NEC $=10$) and manually classify them as bird-related concepts or background concepts. 
For reference, our HCBM (NEC $=10$) relied on ``hooked seabird beak'', ``duck-like body'', ``gull-like body'', ``tree-clinging-like body'' for both classes. These concepts are considered bird-related concepts. Moreover, our model HCBM also relied on ``harbor'', ``lake'', ``sea'', ``tree'', ``lacelike'', for both classes, and ``matted'' for the Landbird class and ``porous'' for the Waterbird class. We consider these concepts as background concepts.
Finally, we apply two types of interventions \textbf{Remove} and \textbf{Retrain} \citep{enouen2026debugging}. \textbf{Remove} zeroes out the contributions of the concepts to be removed, without retraining the model, while \textbf{Retrain} retrains HCBM with only the selected subset of concepts.
\paragraph{Group-wise Evaluation.}  We report the group-wise results in Table~\ref{tab:intervention}, distinguishing between the correlated pairs seen during training: ``Landbirds on Land'' (L.Bird@L) and ``Waterbirds on Water''(W.Bird@W), and the out-of-distribution ``worst groups'': ``Landbird on Water'' (L.Bird@W) and ``Waterbird on Land''(W.Bird@L), which appear only during the evaluation phase.\\ 
\textbf{Bird-related concepts.} For both HCBM and Linear CBM, retraining with only bird-related concepts significantly improves worst-group performance, with gains of $+0.101$ and $+0.064$ for HCBM, and $+0.133$ and $+0.131$ for Linear CBM, alongside overall improvements of $+0.023$ and $+0.038$ respectively. This confirms that bird-related concepts are sufficient for robust classification under distribution shift. 
In contrast, simply removing background concept weights without retraining leads to a strongly asymmetric behavior: while L.Bird@W improves drastically for HCBM ($+0.397$), W.Bird@L and W.Bird@W collapse ($-0.232$ and $-0.532$). This means that simply removing concepts without retraining in our context of small NEC values, leads to badly calibrated classifiers. This effect is even more pronounced for Linear CBM, which degenerate to a trivial predictor with $0.000$ on both worst groups and $1.000$ on training groups. This suggests that Linear CBM have fully exploited the spurious correlation between background concepts and class labels (\emph{e.g.} aquatic background $\rightarrow$ Waterbird), such that removing background weights leaves the model with no discriminative signal.\\
\textbf{Background concepts.} Restricting the model to background concepts dramatically degrades worst-group performance for both models, with drops of $-0.104$ and $-0.117$ points for HCBM, and $-0.111$ and $-0.075$ points for Linear CBM under Retrain, while training group accuracy is largely preserved. This demonstrates that background concepts encode spurious correlations that fail to generalize, and that both HCBM and Linear CBM rely on background concepts.  

These results confirm that targeted interventions on the concept set yield the expected impact on group-wise performance. 

\begin{table}[t]
\centering
\caption{Overall and group-wise accuracy of HCBM (NEC $=10$) and Linear CBM on Waterbirds before and after concept intervention. Values in parentheses indicate the change relative to the model before intervention. Worst groups refer to out-of-distribution pairs unseen during training.}
\resizebox{\columnwidth}{!}{%
\begin{tabular}{ccccccccc}
\toprule
& & & & \multicolumn{2}{c}{\textbf{Worst Groups}} & \multicolumn{2}{c}{\textbf{Training Groups}} \\
\cmidrule(lr){5-6} \cmidrule(lr){7-8}
\multirow{-2}{*}{\textbf{Model}} & \multirow{-2}{*}{\textbf{Intervention}} & \multirow{-2}{*}{\textbf{Method}} & \multirow{-2}{*}{\textbf{Overall}} & L.Bird@W & W.Bird@L & L.Bird@L & W.Bird@W \\
\midrule
Linear CBM  & Before Intervention & Original & 0.678 & 0.612 & 0.221 & 0.997 & 0.883 \\
HCBM        & Before Intervention & Original & 0.704 & 0.580 & 0.349 & 0.996 & 0.893 \\
\midrule
\multirow{4}{*}{Linear CBM} 
& \multirow{2}{*}{Bird-related Concepts} & Remove  & 0.500 ({\color{red}-0.178})  & 0.000 ({\color{red}-0.612})                & 0.000 ({\color{red}-0.221})   & \bf 1.000 ({\color{green!60!black}+0.003}) & \bf 1.000 ({\color{green!60!black}+0.117}) \\
&                                         & Retrain & \bf 0.716 ({\color{green!60!black}+0.038})  & \bf 0.745 ({\color{green!60!black}+0.133}) & \bf 0.352 ({\color{green!60!black}+0.131}) & 0.996 ({\color{red}-0.001})                & 0.773 ({\color{red}-0.110}) \\
& \multirow{2}{*}{Background Concepts}   & Remove  & 0.629 ({\color{red}-0.049})   & 0.616 ({\color{green!60!black}+0.004})     & 0.073 ({\color{red}-0.148})   & 0.997 ($=$)     & 0.830 ({\color{red}-0.053}) \\
&                                         & Retrain & 0.628 ({\color{red}-0.050})   & 0.501 ({\color{red}-0.111})               & 0.146 ({\color{red}-0.075})   & 0.996 ({\color{red}-0.001})                & 0.869 ({\color{red}-0.014}) \\
\midrule
\multirow{4}{*}{HCBM} 
& \multirow{2}{*}{Bird-related Concepts} & Remove  & 0.614 ({\color{red}-0.090})   & \bf 0.977 ({\color{green!60!black}+0.397}) & 0.117 ({\color{red}-0.232})  & \bf 1.000 ({\color{green!60!black}+0.004}) & 0.361 ({\color{red}-0.532}) \\
&                                         & Retrain & \bf 0.727 ({\color{green!60!black}+0.023})  & 0.681 ({\color{green!60!black}+0.101})     & \bf 0.413 ({\color{green!60!black}+0.064})   & 0.989 ({\color{red}-0.007})                & 0.827 ({\color{red}-0.066}) \\
& \multirow{2}{*}{Background Concepts}   & Remove  & 0.641 ({\color{red}-0.063})   & 0.455 ({\color{red}-0.125})               & 0.221 ({\color{red}-0.128})  & 0.991 ({\color{red}-0.005})                & 0.899 ({\color{green!60!black}+0.006}) \\
&                                         & Retrain & 0.650 ({\color{red}-0.054})   & 0.476 ({\color{red}-0.104})               & 0.232 ({\color{red}-0.117})  & 0.991 ({\color{red}-0.005})                & \bf 0.900 ({\color{green!60!black}+0.007}) \\
\bottomrule
\end{tabular}%
}
\label{tab:intervention}
\end{table}

\section{xView dataset} \label{app:xview_concepts}

\subsection{Class definitions}

We list the label numbers corresponding to the three parent classes of interest: vehicles, aircrafts, and maritime vessels.
\begin{itemize}
    \item Vehicles: $4$, $5$, $6$, $17-22$
    \item Aircrafts: $0$, $1$, $2$
    \item Maritime vessels: $23-32$
\end{itemize}

\subsection{Experiment settings}
We provide the details for experiments of object detection with xView dataset.
A CNN detector with RTMDet architecture is first trained for xView, and achieves a high accuracy, computed as follows. We consider that a detection matches a labeled bounding box, if the IoU metric is higher than $0.1$, where the IoU of two bounding boxes is defined as the ratio of the area of their intersection with the area of their union. Otherwise, the true labels of the detections are considered as ``background''. The trained RTMDet detector has thus a detection accuracy of $86\%$ for aircrafts, $77\%$ for vehicles, $78\%$ for maritime vessels, and $99\%$ for background. For all classes, most errors come from non-detected objects.
Next, we consider the crops and logits generated by the trained detector, and only keep the crops which have at least one of their logits above $0.3$, for an optimized tradeoff between false positive and false negative detections. Additionally, the number of vehicles is high (above $200 000$) and most small cars are very similar. We thus downsample this class to $6 000$ units to reach a size similar to that of the other classes.
The obtained dataset is used to fit HCBM and linear CBM to provide explainable classification of the detected bounding boxes. Then, we compute the proportion of explained variance obtained by the surrogate CBM for each logit, and the balanced classification accuracy, estimated by $5$-fold cross-validation. Metrics are averaged over several repetitions to make standard deviations negligible.

\subsection{HCBM model}

In Figures \ref{fig:xview_HCBM_1}-\ref{fig:xview_HCBM_5}, we show an example of HCBM output for xView dataset in the classification case with NEC$=5$, where the training data is directly built from the annotated bounding boxes.
\begin{figure}
	\begin{center}
		\includegraphics[scale=0.37]{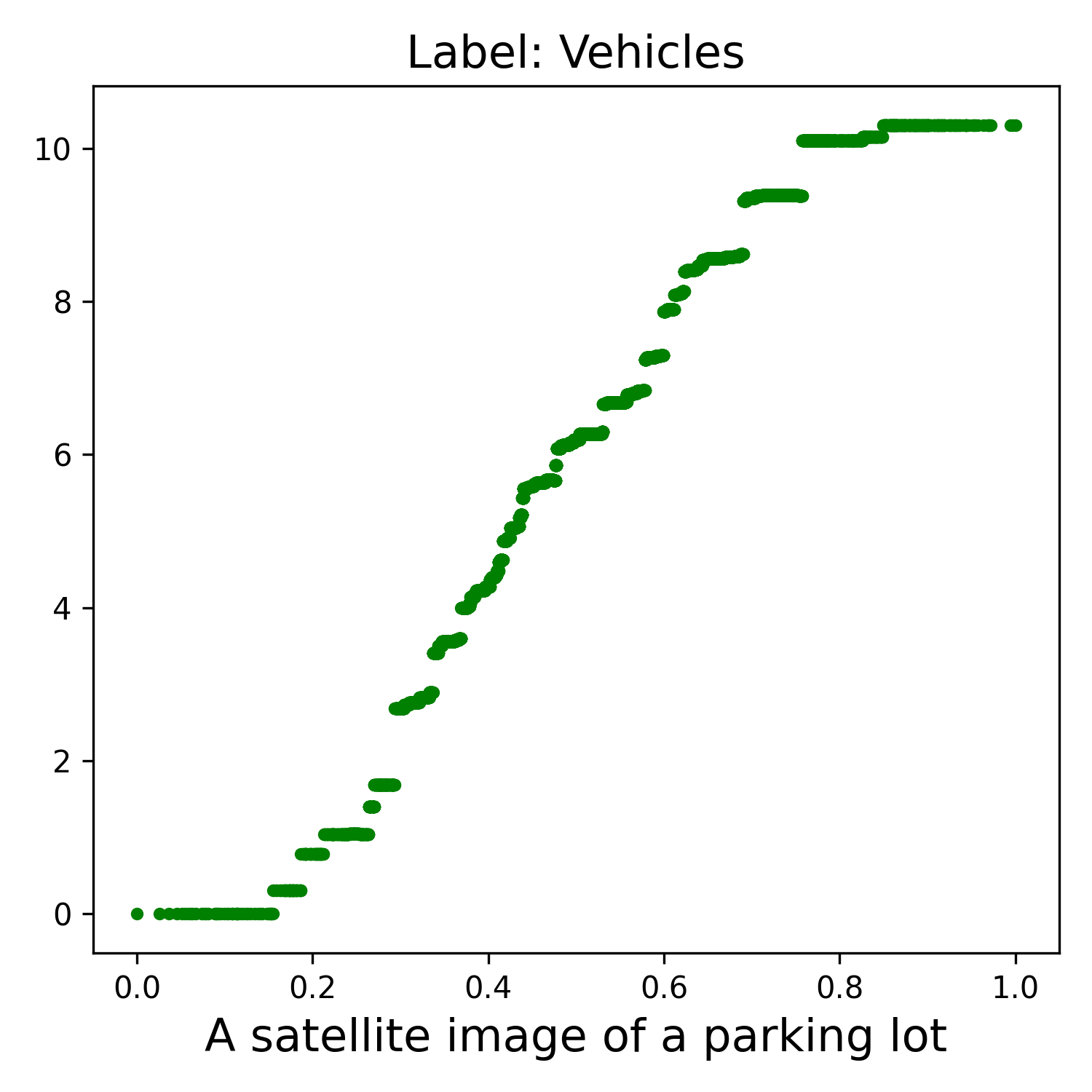} \hspace*{-0.3cm}
        \includegraphics[scale=0.37]{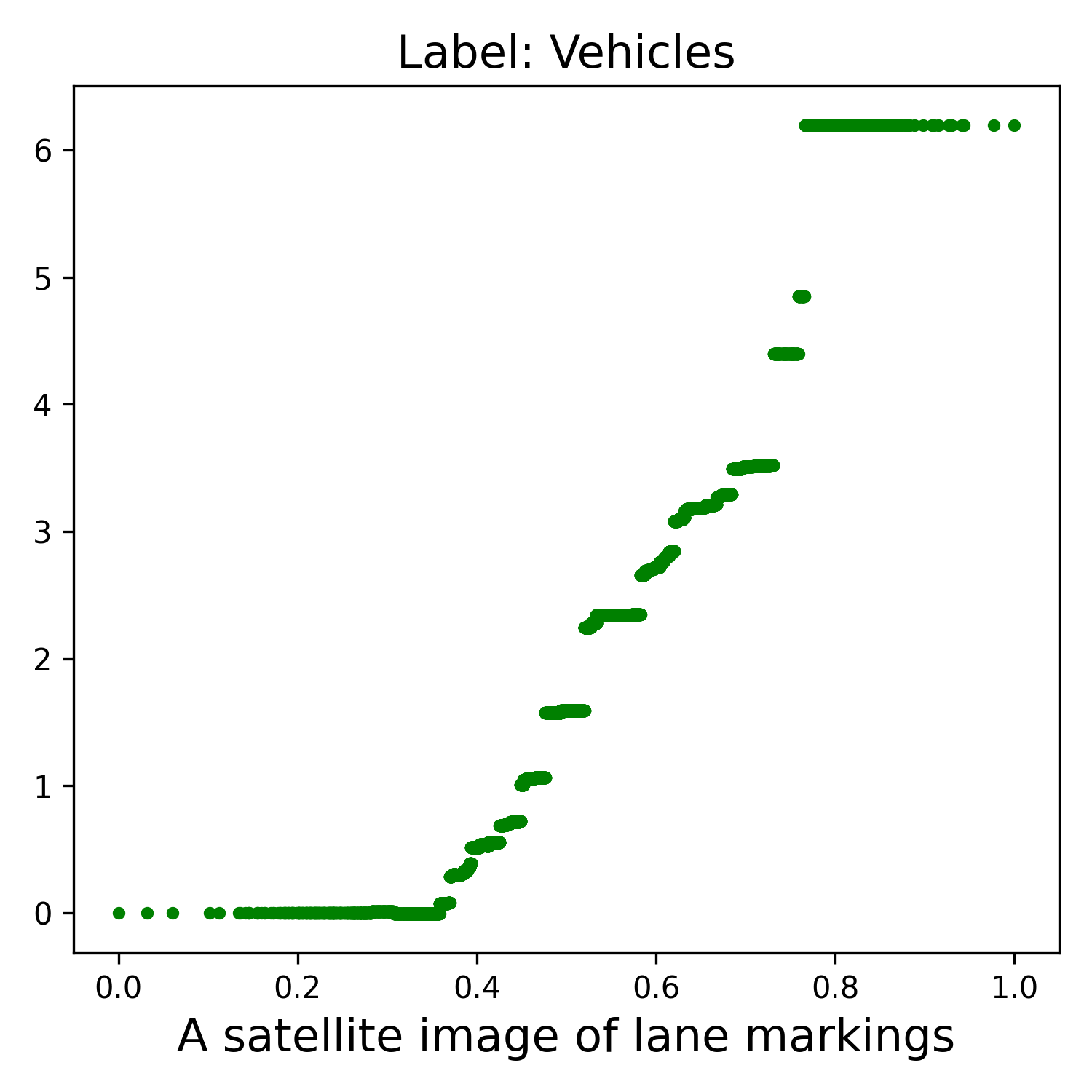} \hspace*{-0.3cm}
        \includegraphics[scale=0.37]{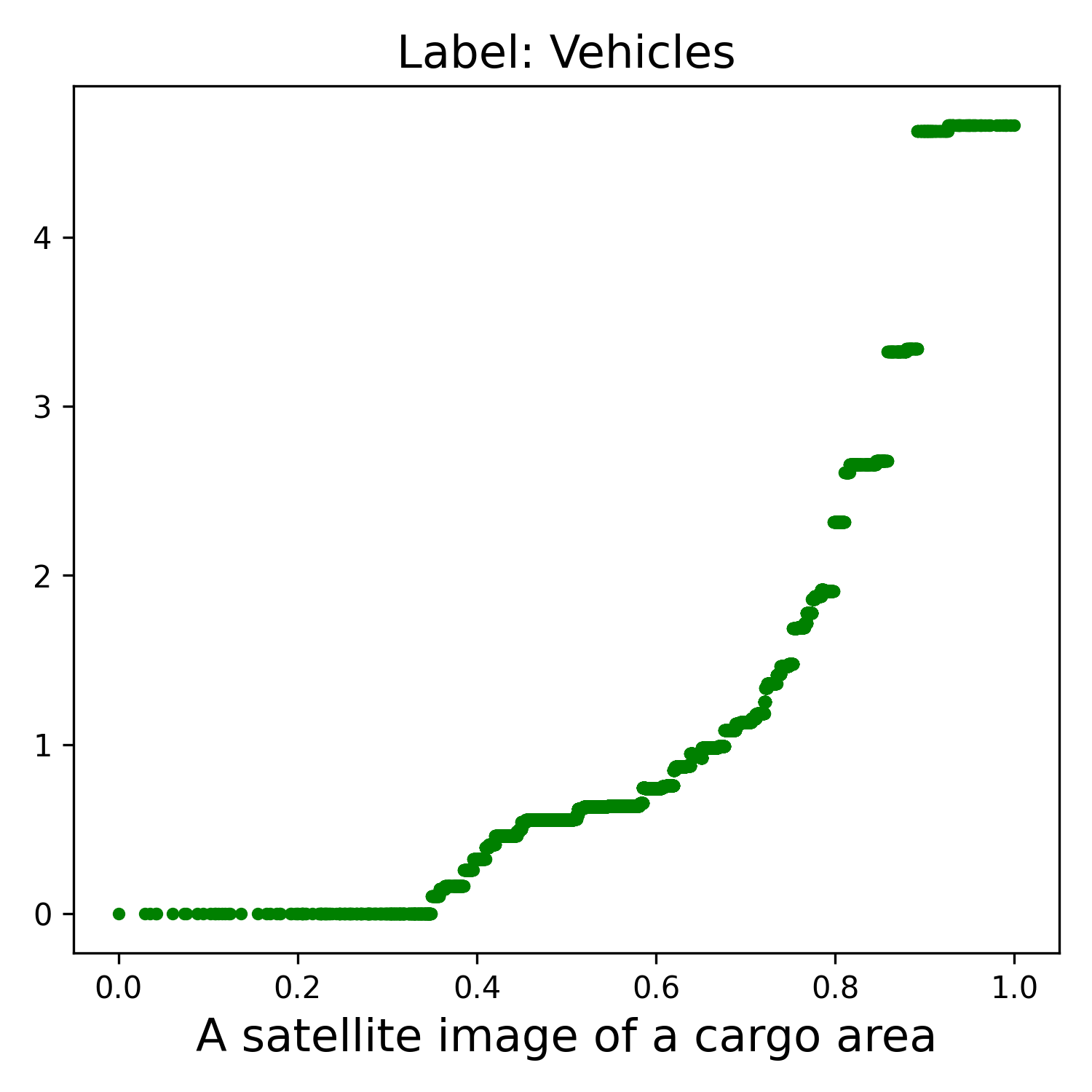}
		\caption{For xView dataset, main effects of positive concepts in the logit decompositions of ``Vehicles'' in HCBM model (NEC $=5$).} \label{fig:xview_HCBM_1}
	\end{center}
\end{figure}
\begin{figure}
	\begin{center}
		\includegraphics[scale=0.37]{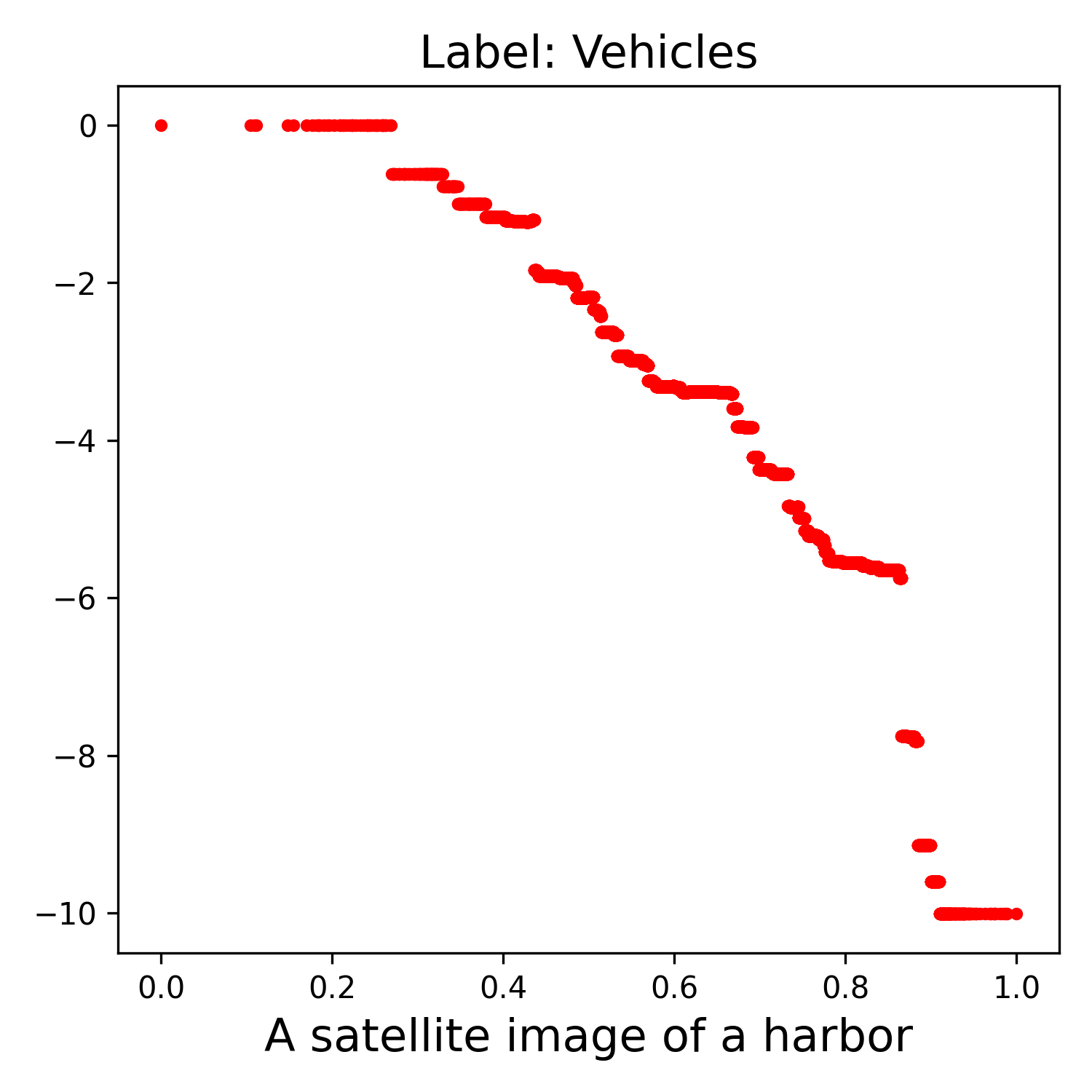} \hspace*{-0.3cm}
        \includegraphics[scale=0.37]{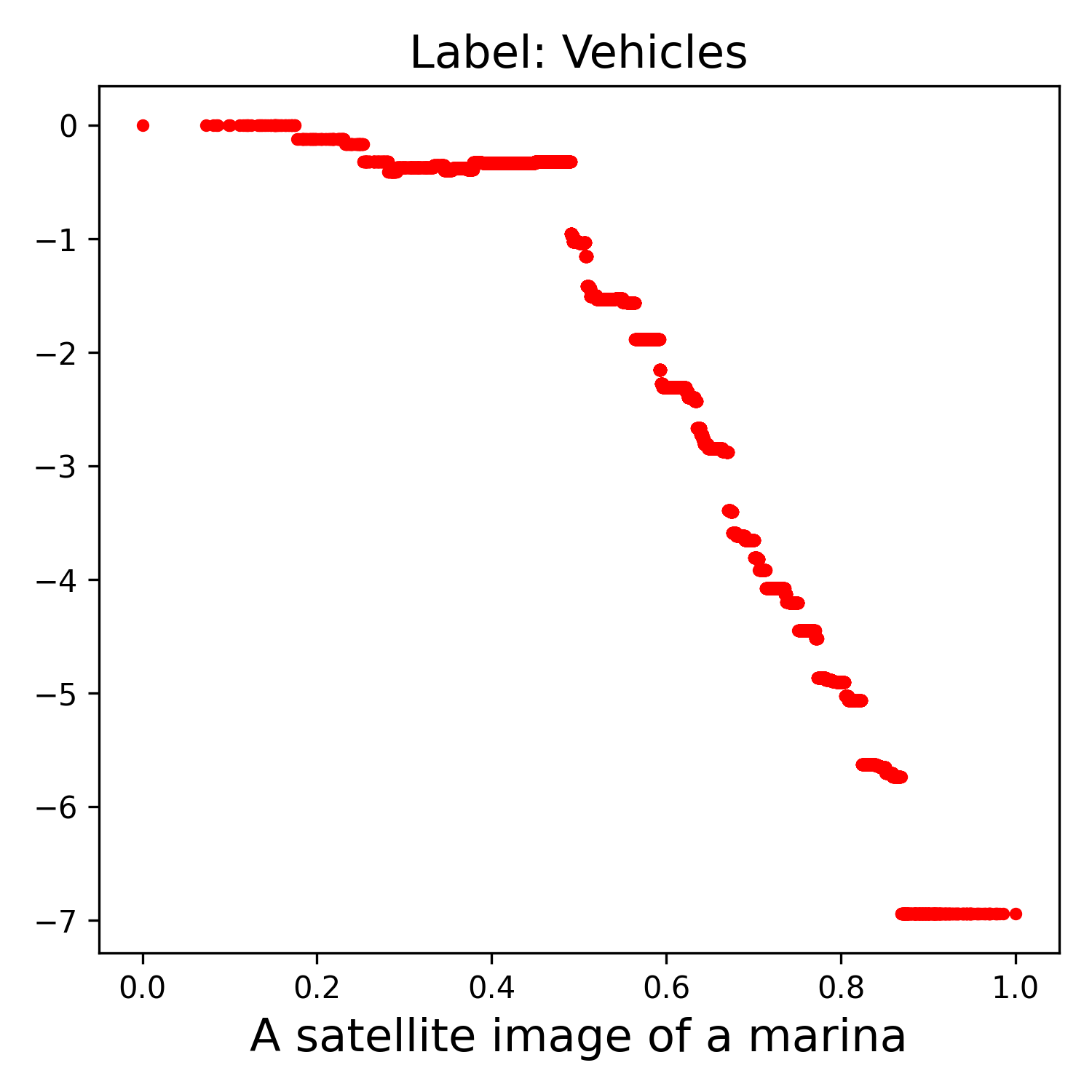} \hspace*{-0.3cm}
        \includegraphics[scale=0.37]{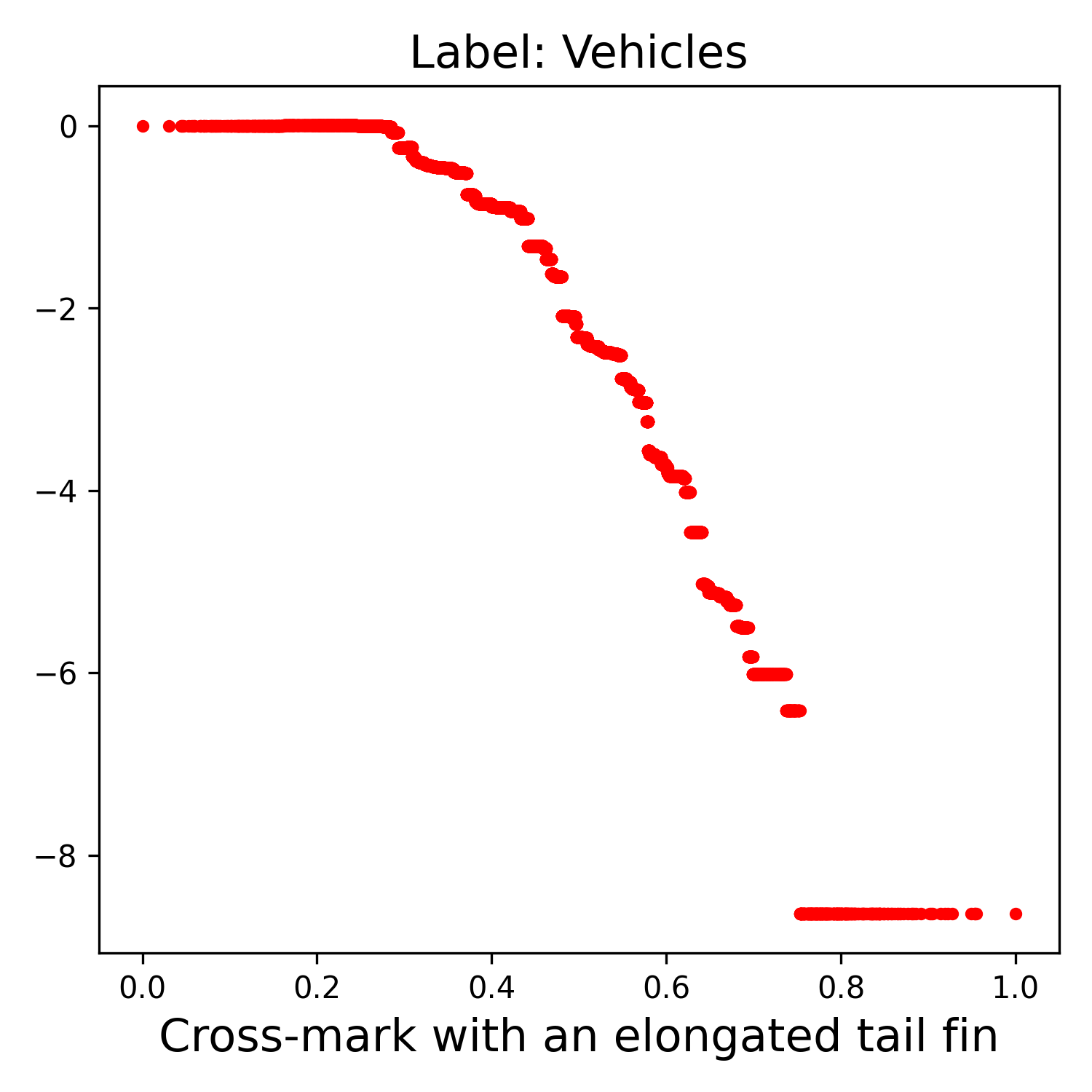}
		\caption{For xView dataset, main effects of negative concepts in the logit decompositions of ``Vehicles'' in HCBM model (NEC $=5$).} \label{fig:xview_HCBM_2}
	\end{center}
\end{figure}
\begin{figure}
	\begin{center}
		\includegraphics[scale=0.37]{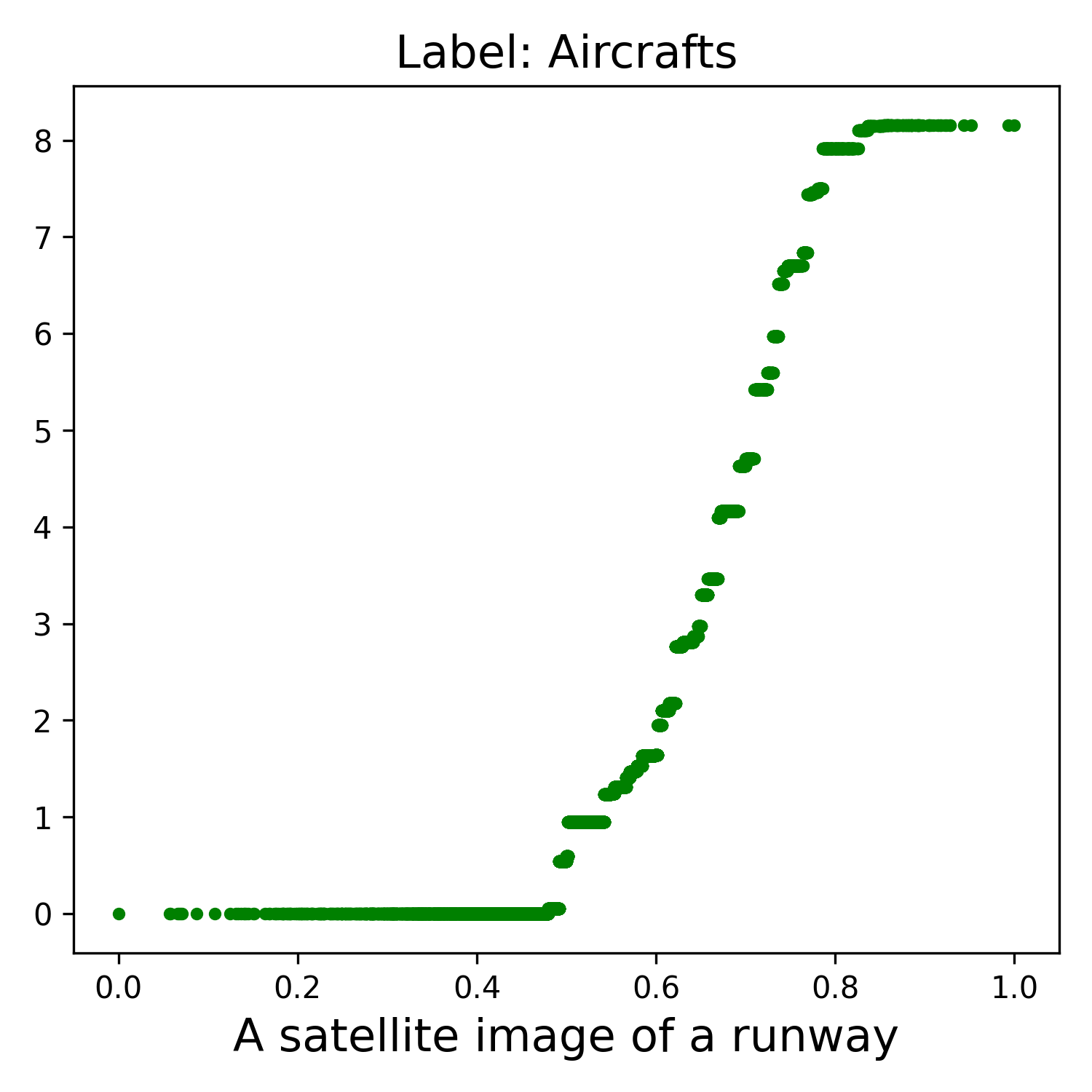} \hspace*{-0.3cm}
        \includegraphics[scale=0.37]{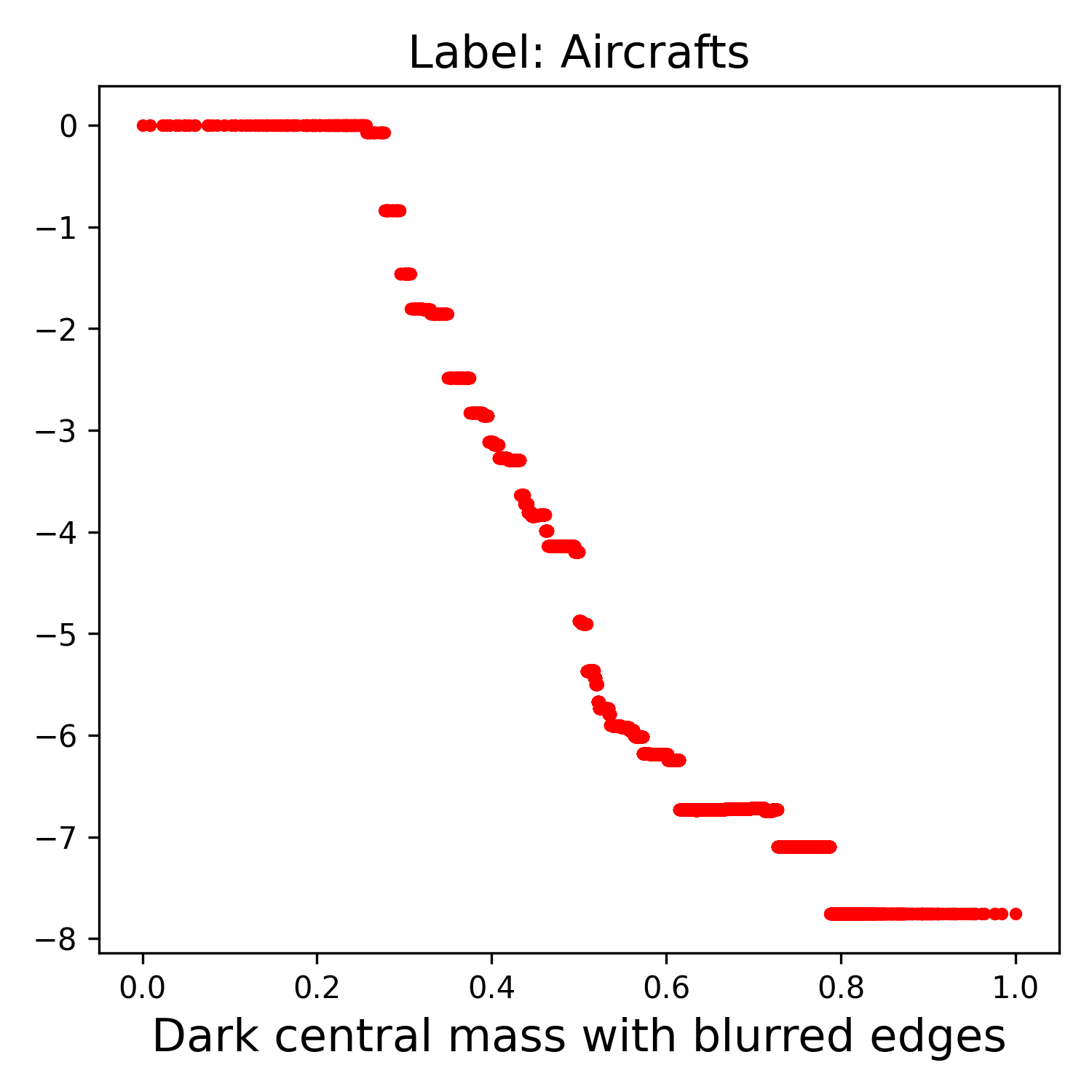} \hspace*{-0.3cm}
        \includegraphics[scale=0.37]{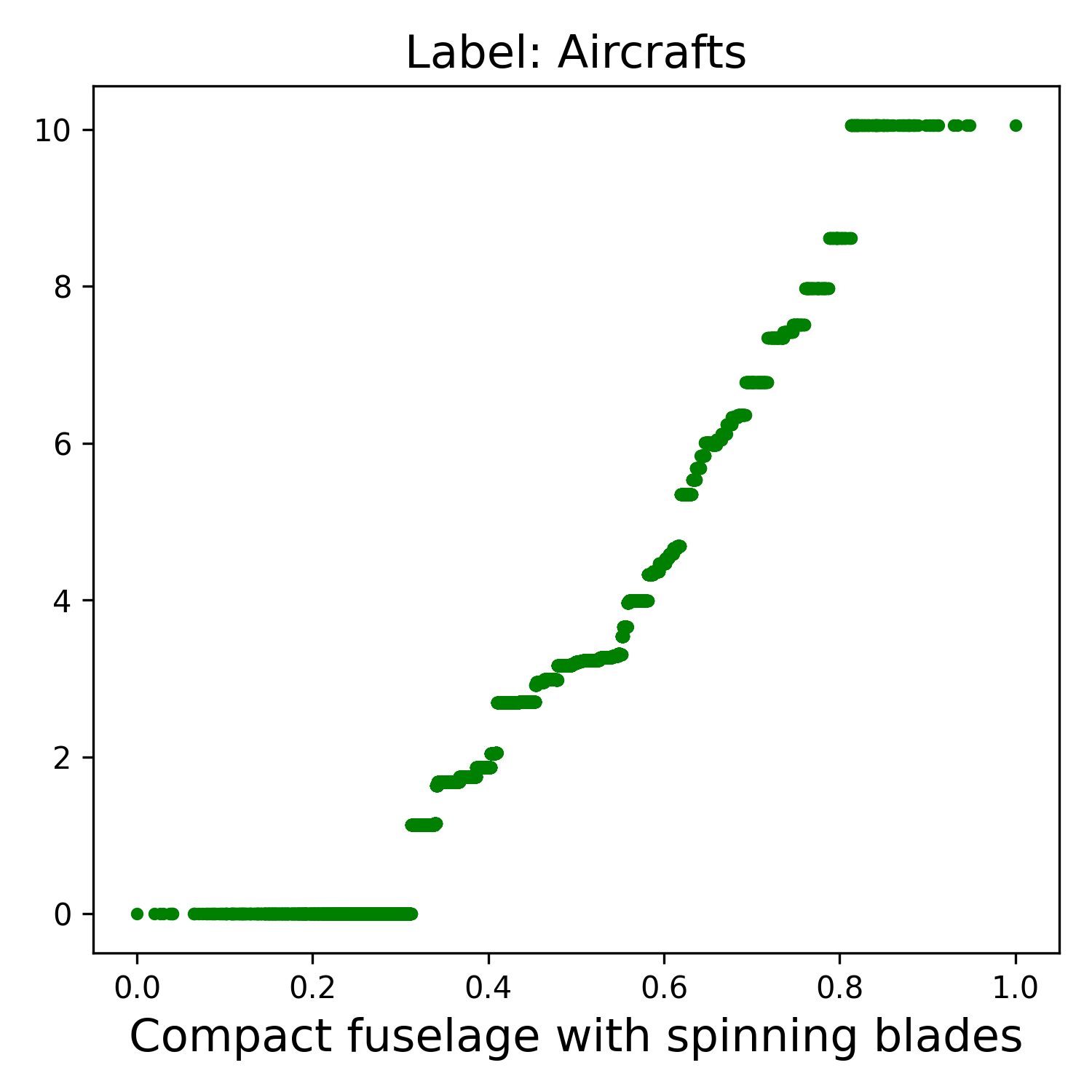}
		\caption{For xView dataset, main effects of concepts in the logit decompositions of ``Aircraft'' in HCBM model (NEC $=5$).} \label{fig:xview_HCBM_3}
	\end{center}
\end{figure}
\begin{figure}
	\begin{center}
		\includegraphics[scale=0.37]{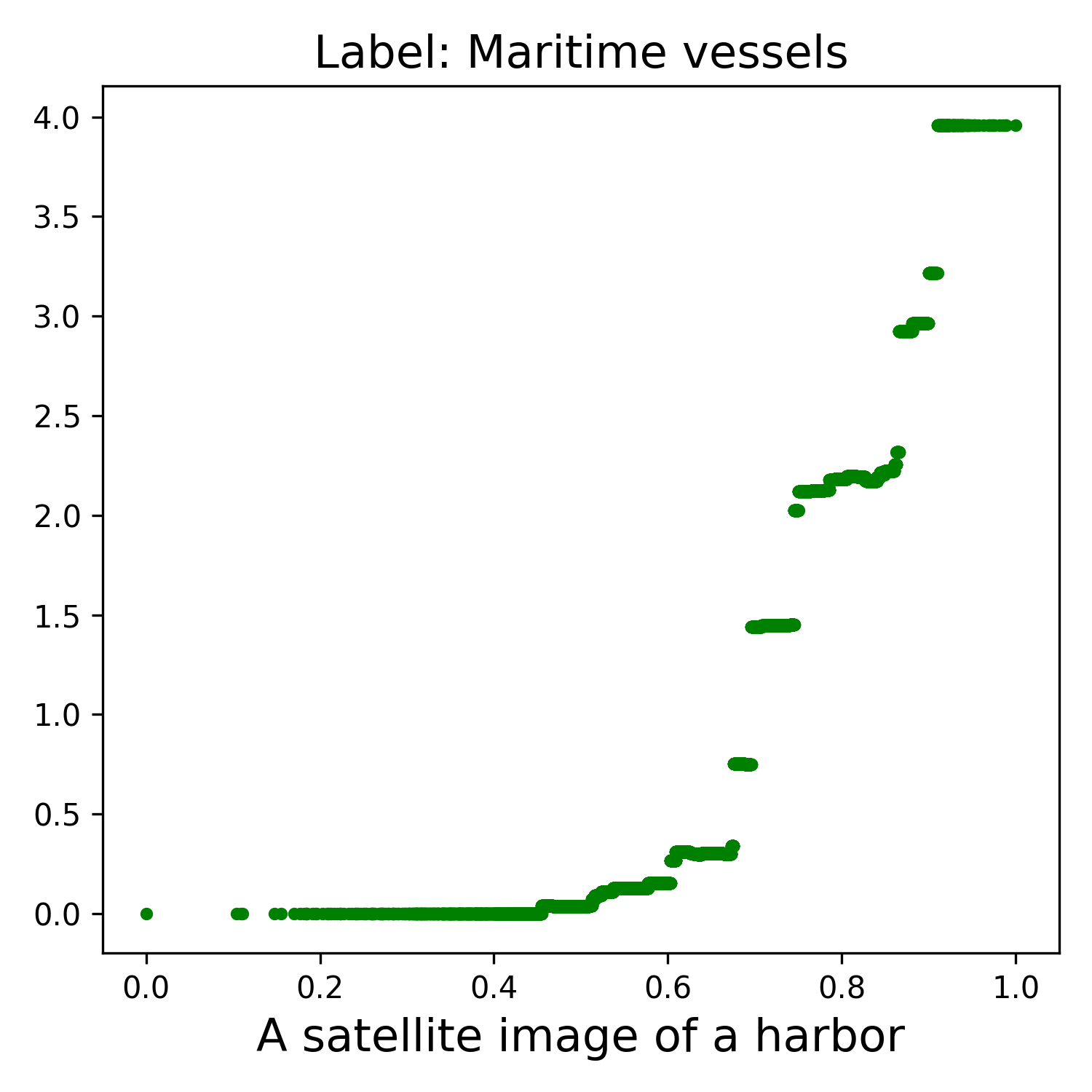} \hspace*{-0.3cm}
        \includegraphics[scale=0.37]{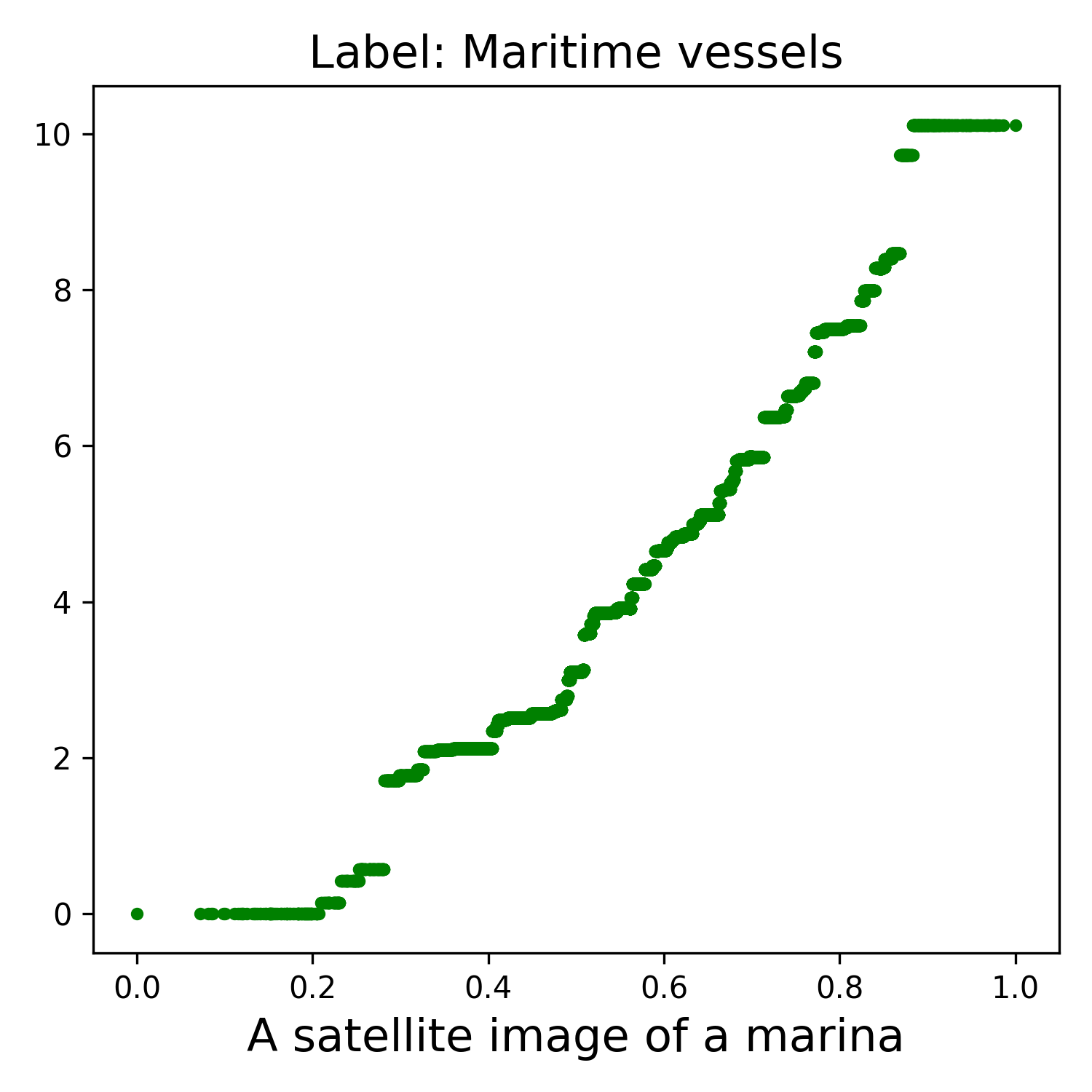} \hspace*{-0.3cm}
        \includegraphics[scale=0.37]{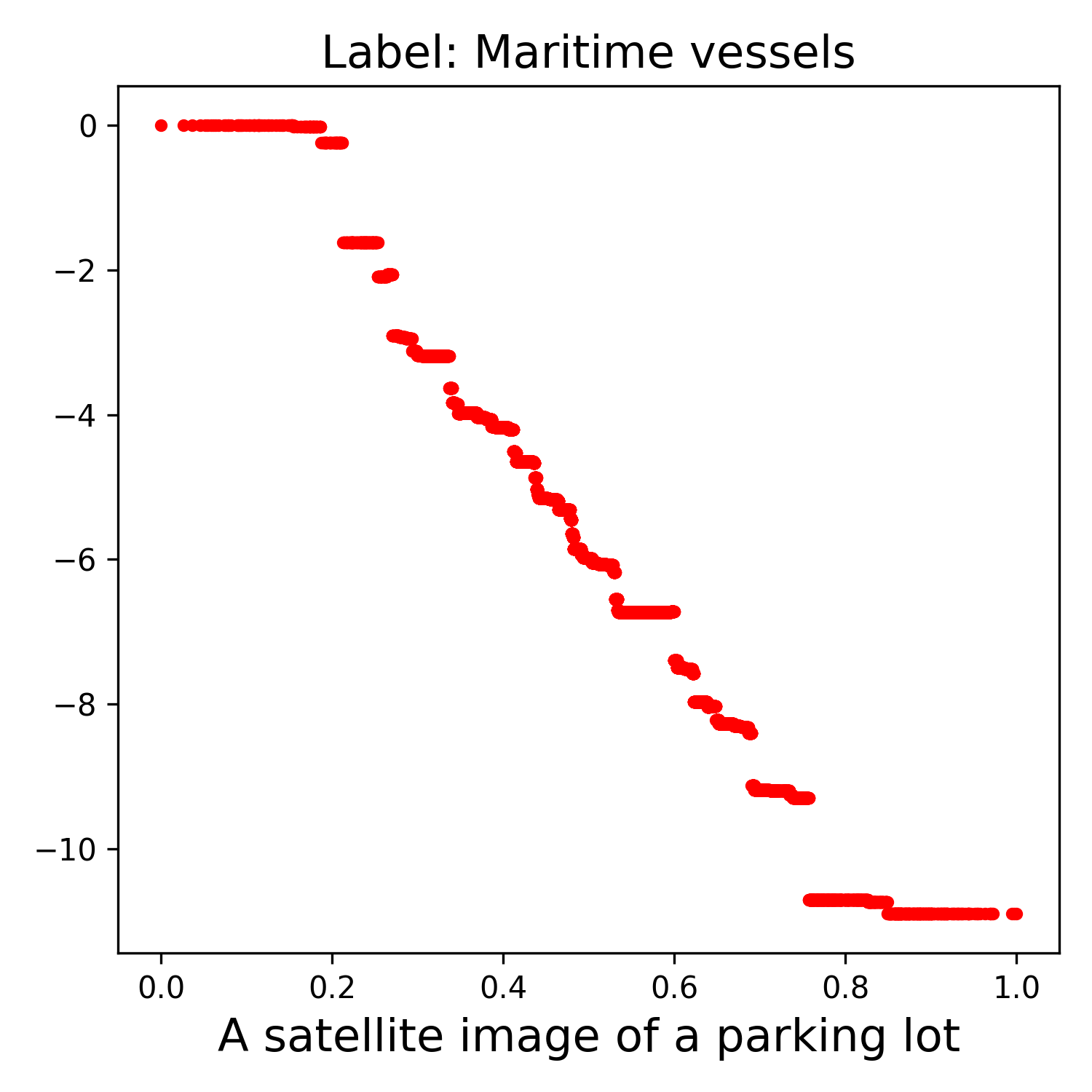}
		\caption{For xView dataset, main effects of concepts in the logit decompositions of ``Maritime Vessels'' in HCBM model (NEC $=5$).} \label{fig:xview_HCBM_4}
	\end{center}
\end{figure}
\begin{figure}
	\begin{center}
		\includegraphics[scale=0.37]{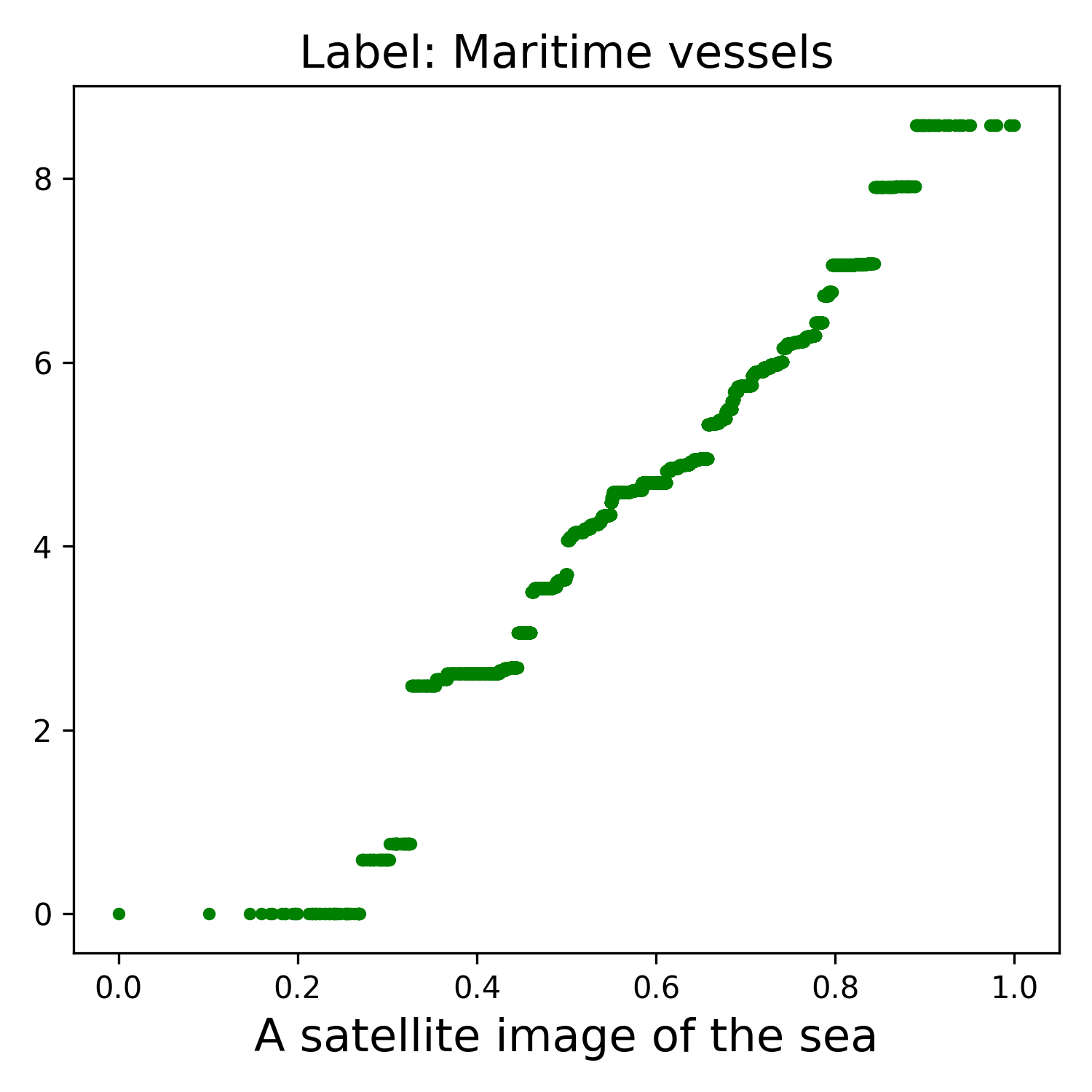} \hspace*{-0.3cm}
        \includegraphics[scale=0.37]{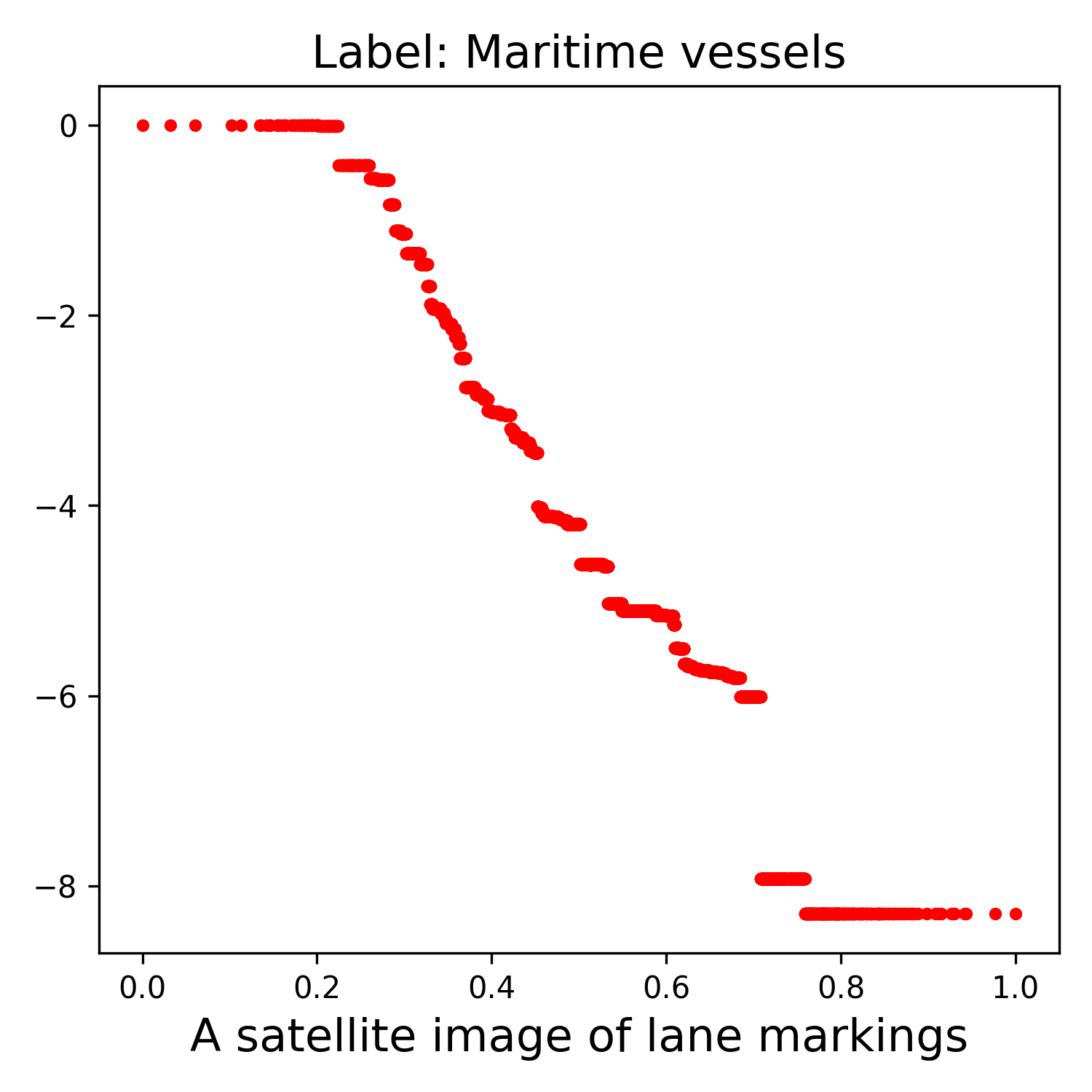} \hspace*{-0.3cm}
        \includegraphics[scale=0.37]{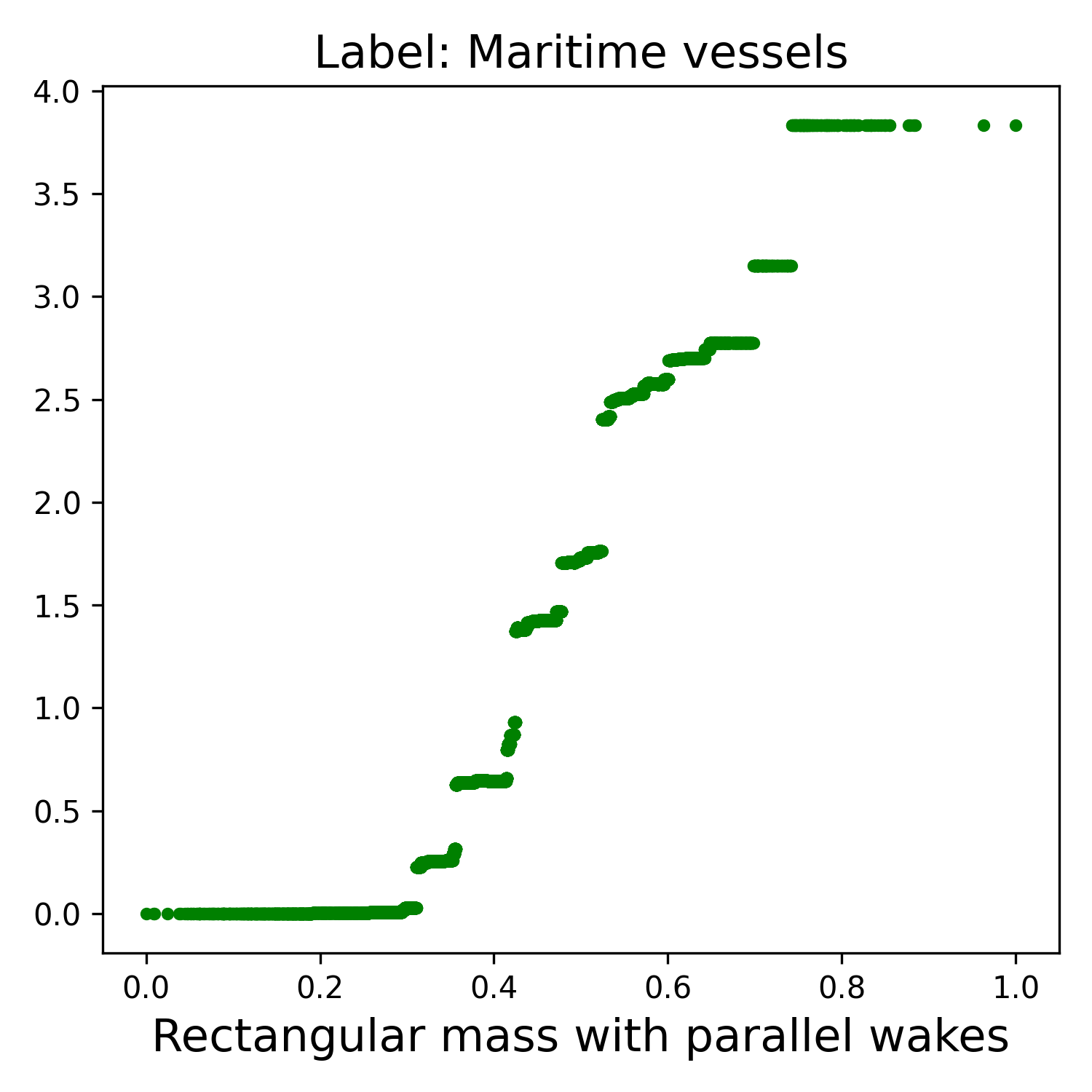}
		\caption{For xView dataset, main effects of concepts in the logit decompositions of ``Maritime Vessels'' in HCBM model (NEC $=5$).} \label{fig:xview_HCBM_5}
	\end{center}
\end{figure}

\subsection{Concepts}

\textbf{Concept list of object attributes:}
\begin{itemize}
    \item Rectangular object
    \item Box-like structure
    \item Elongated shape
    \item Cross-shaped object
    \item Thin elongated body with perpendicular extensions
    \item Bright rectangular shapes contrasting against a gray surface
    \item Long rectangles
    \item An aligned rectangle and square
    \item Several rectangles aligned
    \item Small rectangle with varying color
    \item Black rectangle on lighter background
    \item Rectangle with black and white stripes
    \item Colorful blob with geometric structure
    \item Blurry light spot on a dark background
    \item Shape on dark surface
    \item Streamlined oval shape
    \item Symmetrical structure with a pointed front
    \item White blurry shape on darker background
    \item Cluster of small shapes
    \item Colorful blob with geometric structure
    \item White cross-shaped object
    \item Bright-winged silhouette
    \item X-shaped structure with a shadow
    \item Streamlined cross with a pointed tip
    \item Cruciform structure
    \item Small symmetrical figure
    \item Slender main body with perpendicular extensions
    \item Silvery dot
    \item Twin-winged speck on the tarmac
    \item Horizontal bar with a sharp nose
    \item White cross casting a dark outline
    \item figure with a narrow profile
    \item Black dot surrounded by faint motion trails
    \item Dark central mass with blurred edges
    \item Cross-mark with an elongated tail fin
    \item Small dot with a circular rotor
    \item Compact fuselage with spinning blades
    \item Round central body with extending arcs
    \item Oval-shaped object with a linear extension
    \item Streamlined object leaving a V-shaped wake
    \item Elongated shape cutting through the water
    \item Dark hull with a foamy trail
    \item Narrow shape with a pointed bow
    \item Cigar-shaped silhouette on the waves
    \item White dot trailing a curved water path
    \item Capsule with a tapered end on a dark background
    \item Bright deck with dark water contrast
    \item Oblong object moving across a liquid surface
    \item Rectangular mass with parallel wakes
    \item Streamlined object leaving a V-shaped wake
\end{itemize}

\textbf{Concept list of background elements:}
\begin{itemize}
    \item A satellite image of a runway
    \item A satellite image of a taxiway
    \item A satellite image of taxiway markings
    \item A satellite image of runway markings
    \item A satellite image of an airport
    \item A satellite image of a hangar
    \item A satellite image of a cargo area
    \item A satellite image of a tarmac
    \item A satellite image of a parking lot
    \item A satellite image of airport access roads
    \item A satellite image of a road
    \item A satellite image of urban streets
    \item A satellite image of boulevards
    \item A satellite image of avenues
    \item A satellite image of sidewalks
    \item A satellite image of lane markings
    \item A satellite image of construction zones
    \item A satellite image of motorway exits
    \item A satellite image of maintenance yards
    \item A satellite image of the sea
    \item A satellite image of a harbor
    \item A satellite image of a marina
    \item A satellite image of docks
    \item A satellite image of cargo terminals
    \item A satellite image of a coastal construction zone
\end{itemize}






\end{document}